\documentclass[final,12pt]{clear2026} %

\title[DDCDs]{Smoothing the Landscape: Causal Structure Learning via Diffusion Denoising Objectives}

\usepackage{times}
\usepackage{longtable}
\usepackage{booktabs}

\usepackage{amsmath,amsfonts,bm}

\def\eqref#1{equation~\ref{#1}}

\def\1{\bm{1}}

\def\vt{{\bm{t}}}

\def\vv{{\bm{v}}}

\def\vx{{\bm{x}}}

\def\vz{{\bm{z}}}

\def\mE{{\bm{E}}}

\def\mI{{\bm{I}}}

\def\mW{{\bm{W}}}
\def\mX{{\bm{X}}}
\def\mY{{\bm{Y}}}
\def\mZ{{\bm{Z}}}

\DeclareMathAlphabet{\mathsfit}{\encodingdefault}{\sfdefault}{m}{sl}
\SetMathAlphabet{\mathsfit}{bold}{\encodingdefault}{\sfdefault}{bx}{n}

\def\gG{{\mathcal{G}}}

\def\gN{{\mathcal{N}}}

\newcommand{\R}{\mathbb{R}}

\author{%
\Name{Hao Zhu}\thanks{Work done while at the Department of Computer Science, Tufts University} \Email{haozhu233@gmail.com}\\
\addr Beth Israel Deaconess Medical Center\\
Harvard Medical School \\
Boston, MA, USA
\AND
\Name{Di Zhou}
\Email{Di.Zhou@tufts.edu}\\
\addr Department of Computer Science\\
Tufts University\\
Medford, MA, USA
\AND
\Name{Donna Slonim}
\Email{Donna.Slonim@tufts.edu}\\
\addr Department of Computer Science\\
Tufts University\\
Medford, MA, USA%
}

\begin{document}

\maketitle

\begin{abstract}
  Understanding causal dependencies in observational data is critical for informing decision-making. These relationships are often modeled as Bayesian Networks (BNs) and Directed Acyclic Graphs (DAGs). Existing methods, such as NOTEARS and DAG-GNN, often face issues with scalability and stability in high-dimensional data, especially when there is a feature-sample imbalance. Here, we show that the denoising score matching objective of diffusion models could smooth the gradients for faster, more stable convergence. We also propose an adaptive k-hop acyclicity constraint that improves runtime over existing solutions that require matrix inversion. We name this framework Denoising Diffusion Causal Discovery (DDCD). Unlike generative diffusion models, DDCD utilizes the reverse denoising process to infer a parameterized causal structure rather than to generate data. We demonstrate the competitive performance of DDCDs on synthetic benchmarking data. We also show that our methods are practically useful by conducting qualitative analyses on two real-world examples. Code is available at this url: \href{https://github.com/haozhu233/ddcd}{https://github.com/haozhu233/ddcd}.
\end{abstract}

\begin{keywords}%
  causal discovery, structural equation models, denoising models%
\end{keywords}

\section{Introduction}

Learning network structures representing causal dependencies from high-dimensional observational data has long been a crucial goal.  It plays an important role in multiple disciplines, including genetics, epidemiology, and economics \citep{%
pearl2009causality, uhler2013geometry,spirtes2016causal,pingault2018using, glymour2019review}, and plays a growing role in healthcare research \citep{slonim2002patterns,chandak2022building}.

Such networks, where nodes represent feature variables and edges capture potential relationships, are often represented as Directed Acyclic Graphs (DAGs), which allow no cycles. There has been a great deal of prior work developing methods to address this problem. For example, the PC algorithm \citep{spirtes1991algorithm} is a constraint-based approach that iteratively tests conditional independence, GES \citep{chickering2002optimal} searches for the causal structure  maximizing a scoring function, and LiNGAM \citep{shimizu2012discovery}  relies on the structural equation model (SEM). While these early methods face scalability challenges, subsequent work has extended constraint-based and score-based approaches to high-dimensional settings, including methods for latent confounders \citep{ogarrio2016hybrid,colombo2012learning,pal2025penalized,ribeiro2025dcfci} and efficient Bayesian network structure learning \citep{zhu2021efficient}.

In the continuous optimization paradigm, NOTEARS \citep{zheng2018dags} introduced a continuous acyclicity score that can be solved by a regular numerical optimizer. 
Many other studies have since extended this formulation, focusing on scalability \citep{yu2019dag,lee2019scaling,ng2020role,yu2021dags}, convexity \citep{bello2022dagma,deng2023optimizing,ng2024structure}, sparsity control \citep{wei2020dags,ng2020role}, extension to interventional data \citep{brouillard2020differentiable}, and non-linearity \citep{yu2019dag, ng2019graph,zheng2020learning, yang2021causalvae,shen2022weakly, ng2022masked,kalainathan2022structural,lachapelle2019gradient}. Variations of this model have also been applied to a wide range of settings, such as time series \citep{pamfil2020dynotears,sun2021nts,shang2021discrete} and gene networks \citep{shu2021modeling, agamah2022computational,regdiff}. There are also discussions of the application of such models to datasets with unequal variances and different data types \citep{reisach2021beware,kaiser2021unsuitability,ng2024structure}. Methods that focus on topological ordering instead of a structural DAG constraint have also been explored \citep{sanchez2022diffusion}. 

We propose Denoising Diffusion Causal Discovery (DDCD), inspired by the forward diffusion perturbation and denoising score matching objective of DDPMs \citep{ddpm}. Unlike generative DDPMs, DDCD repurposes the reverse process to learn structural parameters that best denoise the data. Our contributions are the following:

\begin{itemize}
  \item We prove that the denoising score matching objective could be used for causal structural. We  inference and show that it helps avoid sharp local minima and encourage smooth gradients. 
  
  \item We propose an adaptive k-hop acyclicity constraint that transitions from local neighborhood to global enforcement, significantly reducing runtime while guaranteeing valid DAG recovery. 
  
  \item We introduce a permutation-invariant batch sampling strategy that decouples optimization complexity from sample size, ensuring consistent convergence and scalability. 

  \item We propose DDCD-Smooth, which addresses the ``varsortability'' problem in continuous causal discovery by normalizing features to equal scales. This prevents methods from exploiting variance differences as an artifact for identification, making the approach robust to heterogeneous feature scales in real-world data.

\end{itemize}

\section{Background and Related Work} \label{sec:background}

\subsection{Problem Statement} \label{subsec:ps}
Given a dataset $\mX \in \R^{n \times d}$, %
with $n$ samples and $d$ feature variables,
the objective is to learn a dependency graph $\gG$ represented by the weighted adjacency matrix $\mW \in \R^{d \times d}$. Such a graph is often defined as a Bayesian Network (BN), which permits no cycles. 

To ensure the identifiability of $\gG$ and the validity of our proposed denoising objective, we make the following standard assumptions common in causal discovery literature \citep{spirtes2000causation,pearl2009causality}: \textbf{Assumption 1 (Causal Sufficiency)}: there are no unobserved confounders influencing multiple observed variables. \textbf{Assumption 2 (No Selection Bias)}: the samples in $\mX$ are drawn i.i.d. from the underlying distribution with no selection bias. \textbf{Assumption 3 (Acyclicity)}: the true causal structure $\gG$ is a DAG. \textbf{Assumption 4 (Additive Noise Model)}: the data follows a structural equation model $x_j = f_j(PA_j)+z_j$ with additive noise. \textbf{Assumption 5 (Identifiability)}: The error terms $z$ are jointly independent. For unique identifiability beyond the Markov Equivalence Class, we assume errors are either non-Gaussian \citep{shimizu2006linear} or Gaussian with equal variances \citep{peters2014identifiability}. \textbf{Assumption 6 (Faithfulness)}: The joint distribution P(X) is faithful to $\gG$.

\subsection{Structural Equation Models (SEMs) and Continuous DAG constraint} \label{sem}

Structural Equation Models (SEMs) \citep{grace2006structural,shipley2016cause,kline2023principles} provide a framework to model variable dependencies.  For a linear SEM, we simply assume that each variable is a linear combination of its parents with some noise. In its matrix multiplication form, we have

\begin{equation}
   \label{eqn:linear_sem}
   \mX = \mX\mW + \mE,
\end{equation}

\noindent where $\mE \in \R^{n \times d}$ captures the error terms. Based on this assumption, many existing SEMs aim to estimate matrix $\mW$ such that the reconstruction error is minimized \citep{van20130,loh2014high,zheng2018dags}. Since the adjacency matrix is often sparse, many methods choose to add either L1 or L2 regularization on $\mW$ to encourage sparsity \citep{vowels2022d}. In this case, we have the following training objective,

\begin{equation}
   \label{eqn:linear_sem_obj}
   \min_{W} \frac{1}{2n}\|\mX-\mX\mW\|^2_F + \lambda_1\|\mW\|_1 + \lambda_2\|\mW\|_2.
\end{equation}

While traditional approaches often rely on combinatorial optimization, \citet{zheng2018dags} proposed NOTEARS, which introduced a continuous score characterizing graph acyclicity:

\begin{equation}
   \label{eqn:notears_dag}
   h(\mW) = \text{tr}(e^{\mW \circ \mW})-d,
\end{equation}

\noindent where $\circ$ is the Hadamard product, $e^{\mW}$ is the matrix exponential of $\mW$, and $\text{tr}()$ is the trace of a matrix. Essentially, matrix $\mW$ is a DAG if and only if $h(\mW) = 0$. Since the function $h(\mW)$ has a simple and smooth gradient function, it can be used in many gradient-based continuous optimization algorithms. 
In terms of complexity, since the score function $h(\mW)$ requires the matrix exponential, the runtime of NOTEARS is at least $\mathcal{O}(d^3)$ \citep{arioli1996pade}.

\subsection{Denoising Diffusion Probabilistic Models}
Denoising Diffusion Probabilistic Models (DDPMs) are a class of generative model that shows strong performance in modeling complex data distributions \citep{ddpm}. A typical DDPM starts from a non-parameterized forward diffusion process. Given an unperturbed input $\vx_0$, the forward process aims to generate a series of noisy samples $\vx_0,\vx_1,...,\vx_T$ over $T$ steps, where $\vx_T$ usually stands for pure noise. In each step, a small amount of Gaussian noise is gradually introduced following a diffusion schedule $\beta$ as shown in Equation \ref{eqn:diffusion_forward}.  

\begin{equation}
   \label{eqn:diffusion_forward}
   \vx_t = \sqrt{1-\beta_t}\vx_{t-1} + \sqrt{\beta_t}\vz_{t-1}
\end{equation}

\noindent Here, $\vz_{t-1} \sim \gN(0, 1)$, so Equation \ref{eqn:diffusion_forward} is essentially trying to reduce the means to $\mathbf{0}$ while increasing the variances  to $\mathbf{1}$. With %
reparameterization, 
it can be rewritten as Equation \ref{eqn:diffusion_forward_2}: %

\begin{equation}
   \label{eqn:diffusion_forward_2}
   \vx_t = \sqrt{\overline{\alpha_t}}\vx_{0} + \sqrt{1-\overline{\alpha_t}}\vz,
\end{equation}

\noindent where $\overline{\alpha_t} = \prod_{i=0}^{t} (1-\beta_i)$ and $\vz \sim \gN(0, 1)$, so the noisy data $x_t$ can be generated in one step. 

The actual modeling piece of DDPM is the reverse model, which is trained to predict the added noise $\vz$ and to denoise the data. The  choice of model for the reverse process depends on the input data. Recent studies have suggested similarity between diffusion models and a generalized form of variational autoencoder (VAE) \citep{kingma2013auto} with infinite latent spaces \citep{luo2022understanding}. 

\section{Methods} \label{method}

\subsection{Denoising Objective for Linear SEMs \& NOTEARS}

Inspired by the denoising objective in DDPMs, we propose an alternative training objective for learning the adjacency matrix $\mW$ in SEMs. We augment each sample in the same way as the forward diffusion process in Equation \ref{eqn:diffusion_forward_2}. Then, we will optimize a reverse model with the parameterized $\mW$ to predict the added noise. Note that unlike standard diffusion models where the reverse model learns a time-dependent transition kernel, our reverse model focuses on the single optimal $\mW$ that minimizes denoising error across all noise levels.

For linear SEMs, the reverse model is trying to minimize the following denoising objective:

\begin{equation}
   \label{eqn:denoising_obj}
   \min_{W} \frac{1}{2n}\|(\mX_\vt - \mX_\vt\mW) - \text{diag}(\sqrt{1-\overline{\alpha_\vt}})\mZ(\mI - \mW)\|^2_F \\+ \lambda_1\|\mW\|_1 + \lambda_2\|\mW\|_2.
\end{equation}

Here, following diffusion literature, we use $t \in \{1, \ldots, T\}$ to denote diffusion time steps, where $T$ is the total number of diffusion steps. The original (unperturbed) data is $\mX_0$, and $\mX_\vt$ denotes data perturbed at diffusion step $\vt$, which is a vector of diffusion time steps for all samples. The cumulative noise schedule is $\overline{\alpha}_t = \prod_{i=1}^{t}(1-\beta_i)$, where $\beta_i$ is the noise added at step $i$. The random noise matrix is $\mZ \sim \mathcal{N}(0, \mathbf{I})$. $\text{diag}(\vv)$ is the diagonal operator that converts vector $\vv$ to a diagonal matrix. 

\begin{theorem}[Objective Equivalence] \label{theorm1}
   For linear SEMs, minimizing the denoising objective in Eq. \ref{eqn:denoising_obj} is equivalent to minimizing the standard SEM reconstruction loss in Eq. \ref{eqn:linear_sem_obj}. Therefore, under Assumption 1-6 in  section \ref{subsec:ps}, the denoising objective can be used for causal structure learning.
\end{theorem}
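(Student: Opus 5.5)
The plan is a direct algebraic reduction. Write the perturbed data row-wise with the forward process of Equation \ref{eqn:diffusion_forward_2}:
\[
\mX_\vt = \text{diag}(\sqrt{\overline{\alpha_\vt}})\,\mX_0 + \text{diag}(\sqrt{1-\overline{\alpha_\vt}})\,\mZ .
\]
Left multiplication by a diagonal matrix rescales each sample (row). It therefore commutes with right multiplication by $(\mI-\mW)$. Substituting into the residual of Eq. \ref{eqn:denoising_obj} gives
\[
\mX_\vt(\mI-\mW) - \text{diag}(\sqrt{1-\overline{\alpha_\vt}})\mZ(\mI-\mW) = \text{diag}(\sqrt{\overline{\alpha_\vt}})\,\mX_0(\mI-\mW),
\]
so the noise term cancels exactly, not just in expectation. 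The denoising loss thus becomes
\[
\frac{1}{2n}\sum_{i=1}^n \overline{\alpha}_{t_i}\,\|x_{0,i} - x_{0,i}\mW\|_2^2 + \lambda_1\|\mW\|_1+\lambda_2\|\mW\|_2,
\]
which is a per-sample reweighted version of the reconstruction loss in Eq. \ref{eqn:linear_sem_obj}. The weights $\overline{\alpha}_{t_i}\in(0,1]$ are known and independent of $\mW$.

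Next I show that this reweighting does not change the minimizer in the relevant sense. I would take the expectation over the uniformly sampled time steps $t_i$, which are drawn independently of the data. That expectation equals $c\cdot\frac{1}{2n}\|\mX_0-\mX_0\mW\|_F^2$ with $c=\mathbb{E}_t[\overline{\alpha}_t]>0$. So the expected denoising objective is the SEM objective with the data-fit term scaled by the constant $c$. This is equivalent to Eq. \ref{eqn:linear_sem_obj} after rescaling the regularization weights, $\lambda_k\mapsto\lambda_k/c$. In the unregularized or population limit ($n\to\infty$), the weighted least squares problem $\mathbb{E}[\overline{\alpha}_t]\,\mathbb{E}\|x-x\mW\|^2$ has exactly the same minimizer as the unweighted one, because $t$ is independent of $x$. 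This is the sense of ``equivalence'' I will state explicitly.

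To finish, I invoke the standard identifiability results under Assumptions 1--6: linear non-Gaussian errors \citep{shimizu2006linear}, or equal-variance Gaussian errors \citep{peters2014identifiability}. Together with the acyclicity constraint (Equation \ref{eqn:notears_dag} or its variants), these guarantee that the constrained minimizer of the population SEM loss recovers the true $\mW$. Since the minimizers coincide, the same holds for the denoising objective.

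The main obstacle is the finite-sample, fixed-$\vt$ case. There the weights $\overline{\alpha}_{t_i}$ differ across samples, so the empirical minimizers are weighted rather than ordinary least-squares solutions and need not coincide exactly. I would handle this in two steps. First, I would argue equivalence in expectation over $\vt$, as above. Second, I would note that the weighted empirical risk is a consistent estimator of the same population risk, because the weights are bounded away from zero ($\overline{\alpha}_T>0$) and independent of the data. Hence both objectives share the same population minimizer, which is all the identifiability argument needs.
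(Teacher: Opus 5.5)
Your algebraic core is the same identity the paper derives, $\text{diag}(\sqrt{\overline{\alpha_\vt}})(\mX_0-\mX_0\mW)=(\mX_\vt-\mX_\vt\mW)-\text{diag}(\sqrt{1-\overline{\alpha_\vt}})\mZ(\mI-\mW)$. You draw a different and more accurate conclusion from it. The paper treats the denoising residual as if it \emph{were} the SEM residual and calls the objectives ``algebraically equivalent,'' silently dropping the row scaling $\text{diag}(\sqrt{\overline{\alpha_\vt}})$. You keep that scaling and correctly find that Eq.~\ref{eqn:denoising_obj} equals the $\overline{\alpha}_{t_i}$-weighted SEM loss exactly, with no dependence on $\mZ$. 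For a fixed draw of $\vt$, its minimizers therefore generally differ from those of Eq.~\ref{eqn:linear_sem_obj}. Even a common $t$ shifts the balance against $\lambda_1,\lambda_2$. Equivalence holds only after averaging over $\vt$ (with $\lambda_k\mapsto\lambda_k/c$) or in the population limit. That is the form that matters for training, because $\vt$ is redrawn and stochastic optimization targets the expected objective. The paper's route is shorter, but its literal claim is not what the identity shows; yours proves a correctly qualified version.

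Your last step, however, overreaches. Identifiability of the linear non-Gaussian model \citep{shimizu2006linear} is a property of the model class, not of the squared loss, which only sees second moments. Take $x_1=e_1$, $x_2=0.5\,x_1+e_2$ with $\mathrm{Var}(e_1)=1$, $\mathrm{Var}(e_2)=0.01$ and non-Gaussian $e_j$. The true DAG has population loss $\mathbb{E}\|x-x\mW\|^2=1.01$, whereas the reversed DAG attains about $0.30$. So the acyclicity-constrained population minimizer is the wrong graph; this is the varsortability effect of \citet{reisach2021beware}, which the paper itself discusses. Your recovery claim is valid in the equal-variance branch of Assumption~5, where the true DAG uniquely minimizes the population least-squares score \citep{loh2014high}. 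In general you may only conclude that the denoising objective inherits whatever guarantees the least-squares SEM score has, and that is all the paper's ``Therefore'' actually rests on.

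A smaller point concerns your consistency argument. The bound $\overline{\alpha}_T>0$ does no work there, and with the paper's schedule $\overline{\alpha}_T<e^{-50}$ anyway. What you need is that the pairs $(t_i,x_{0,i})$ are i.i.d.\ with $t_i$ independent of $x_{0,i}$ and weights in $(0,1]$. Write the weighted loss as $\frac12\text{tr}\big((\mI-\mW)^\top\hat\Sigma_w(\mI-\mW)\big)$ with $\hat\Sigma_w=\frac1n\sum_i\overline{\alpha}_{t_i}x_{0,i}^\top x_{0,i}\to c\,\mathbb{E}[x^\top x]$. This gives uniform convergence on bounded sets, which is what lets you pass from convergence of objectives to convergence of minimizers over the nonconvex acyclic set.
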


\begin{proof}
   Consider the case when each sample in $\mX$ is perturbed using the forward diffusion process in \ref{eqn:diffusion_forward_2}. The perturbed observational data could be written as in Equation \ref{eqn:diffusion_forward_matrix}. Here we use $\vt$ to denote the different diffusion time steps for all samples in $\mX$ and the $\text{diag()}$ operator scales each row of $\mX_0$ and $\mZ$ accordingly based on the diffusion schedule. 

\begin{equation}
   \label{eqn:diffusion_forward_matrix}
   \mX_\vt = \text{diag}(\sqrt{\overline{\alpha_\vt}})\mX_{\textbf{0}} + \text{diag}(\sqrt{1-\overline{\alpha_\vt}})\mZ,
\end{equation}

With Equations \ref{eqn:diffusion_forward_matrix} and \ref{eqn:linear_sem}, we can develop the following equations.

\begin{equation}
   \label{eqn:diffusion_forward_matrix_w}
   \mX_\vt\mW = \text{diag}(\sqrt{\overline{\alpha_\vt}})\mX_{\textbf{0}}\mW + \text{diag}(\sqrt{1-\overline{\alpha_\vt}})\mZ\mW,
\end{equation}

\begin{equation}
   \mX_\vt - \mX_\vt \mW = \text{diag}(\sqrt{\overline{\alpha_\vt}})(\mX_{\textbf{0}} - \mX_{\textbf{0}}\mW)  + \text{diag}(\sqrt{1-\overline{\alpha_\vt}})(\mZ - \mZ\mW),
\end{equation}

\begin{equation}
   \text{diag}(\sqrt{\overline{\alpha_\vt}})(\mX_{\textbf{0}} - \mX_{\textbf{0}}\mW) = (\mX_\vt - \mX_\vt \mW) - \text{diag}(\sqrt{1-\overline{\alpha_\vt}})\mZ(\mI - \mW).
\end{equation}

Recall that in Equation \ref{eqn:linear_sem_obj}, the main objective is to minimize $\mX_{\textbf{0}} - \mX_{\textbf{0}}\mW$. The proof above shows that minimizing $\mX_{\textbf{0}}-\mX_{\textbf{0}}\mW$ is equivalent to minimizing the right-hand side, which measures the distance between $(\mX_\vt - \mX_\vt \mW)$ and $\text{diag}(\sqrt{1-\overline{\alpha_\vt}})\mZ(\mI - \mW)$. Therefore, the denoising objective in Equation \ref{eqn:denoising_obj} is algebraically equivalent to the standard SEM reconstruction objective. 

\end{proof}

\textbf{Notes on Gradient Smoothing:} Beyond objective equivalence, the denoising objective helps adjust the optimization dynamics. By minimizing loss over perturbed data $\mX_t$, we are in fact performing Randomized Smoothing \citep{duchi2012randomized,nesterov2017random}. This places an upper bound on the curvature of the loss landscape (the Lipschitz constant of the gradient) and effectively smooths the sharp local minima common in high-dimensional SEMs \citep{keskar2016large}. Consequently, the optimizer avoids volatile jumps and converges more reliably, even with limited samples. To illustrate this point, we implemented a toy model called NOTEARS-Denoising with the denoising diffusion process, while everything else is the same as in the original implementation of NOTEARS-Linear. 

\subsection{Adaptive k-hop Acyclicity Constraint with gradient clipping} \label{khop-dag}

In practice, the calculation of NOTEARS' DAG constraint involves the matrix exponential, so its runtime is at least $\mathcal{O}(d^3)$. One potential optimization here is to enforce a ``k-hop acyclicity constraint" that only checks the acyclicity score within local neighborhoods of $k$-hops. By keeping a running sum, we replace the matrix exponential with just matrix multiplication, reducing the runtime and theoretical complexity. %
The formula for the k-hop acyclicity constraint is in Equation \ref{eqn:khop_dag}, where $\gamma$ is a scaling factor. A detailed discussion of this is provided in Appendix \ref{khop_dag_app}.

\begin{equation}
   \label{eqn:khop_dag}
   h(\mW, k, \gamma) = \sum_{j=1}^{k+1}\frac{1}{j!\gamma^{2j}}\text{tr}((\gamma\mW \circ \gamma\mW)^j)
\end{equation}

However, fixing $k$ to a small constant value introduces a theoretical risk of longer-range cycles. To eliminate this risk without sacrificing training efficiency, we introduce an \textit{Adaptive Acyclicity Curriculum}. Motivated by the observation that early optimization steps on the adjacency matrix $\mW$ are noisy, we suggest that enforcing global acyclicity at the start of training is computationally wasteful. Instead, we dynamically schedule $k$ during optimization. Let $\tau$ be the current training iteration and $N_{\text{iter}}$ be the total number of iterations.\footnote{We use $\tau$ and $N_{\text{iter}}$ for training iterations to distinguish from diffusion time steps $t$ and $T_{\text{diff}}$.} We define $k_\tau$ as:

\begin{equation}
    k_\tau =
    \begin{cases}
    3 & \text{if } \tau < 0.4 N_{\text{iter}} \quad \text{(Local Alignment)} \\
    10 & \text{if } 0.4 N_{\text{iter}} \le \tau < 0.9 N_{\text{iter}} \quad \text{(Structure Refinement)} \\
    d & \text{if } \tau \ge 0.9 N_{\text{iter}} \quad \text{(Global Guarantee)}
    \end{cases}
    \label{eq:adaptive_k}
\end{equation}

This schedule proceeds in three phases: \textbf{1. Local Alignment:} In the early phase, we restrict constraints to a small neighborhood ($k=3$). This prevents tight feedback loops (e.g., $A \to B \to A$) while allowing the optimizer to rapidly traverse the loss landscape. \textbf{2. Structure Refinement:} As the adjacency matrix sparsifies, we increase $k$ to detect medium-range cycles. \textbf{3. Global Guarantee:} In the final phase, we set $k$ equal to the number of feature variables $d$. This is mathematically equivalent to the full NOTEARS constraint \citep{zheng2018dags}, guaranteeing a valid DAG, but it runs quickly because $W$ is usually sparse at this stage.

\noindent This strategy ensures our final output is a theoretically-valid DAG, while maintaining the practical computational efficiency of the $k$-hop approximation for the vast majority of the training process.

\subsection{Permutation-Invariant Batch Sampling}

Existing causal discovery algorithms often operate on the full data matrix $\mX \in \R^{n \times d}$. To improve scalability and stability across varying dataset sizes, we employ a fixed-size bootstrap sampling strategy from RegDiffusion \citep{regdiff}. This approach is theoretically grounded in the `Deep Sets' perspective \citep{zaheer2017deep}. Since causal structure learning is permutation invariant in terms of samples, we could treat data as an exchangeable set. This allows us to use fixed-size mini-batches with resampling as uniform Monte Carlo estimators of the underlying population. At the same time, we also decouple algorithm runtime from the total sample size $n$ and gain more stable and consistent behavior on datasets of different sizes. 

\subsection{DDCD Models}
With the new denoising objective in mind, we propose a set of 3 DDCD models: DDCD Linear, DDCD Nonlinear, and DDCD Smooth. All three models, as illustrated in Figure \ref{fig:model_architecture}, are trained with the denoising objective, the diffusion training framework, and the k-hop acyclicity constraint. 

\begin{figure}[t]
\centering
\includegraphics[width=0.7\linewidth]{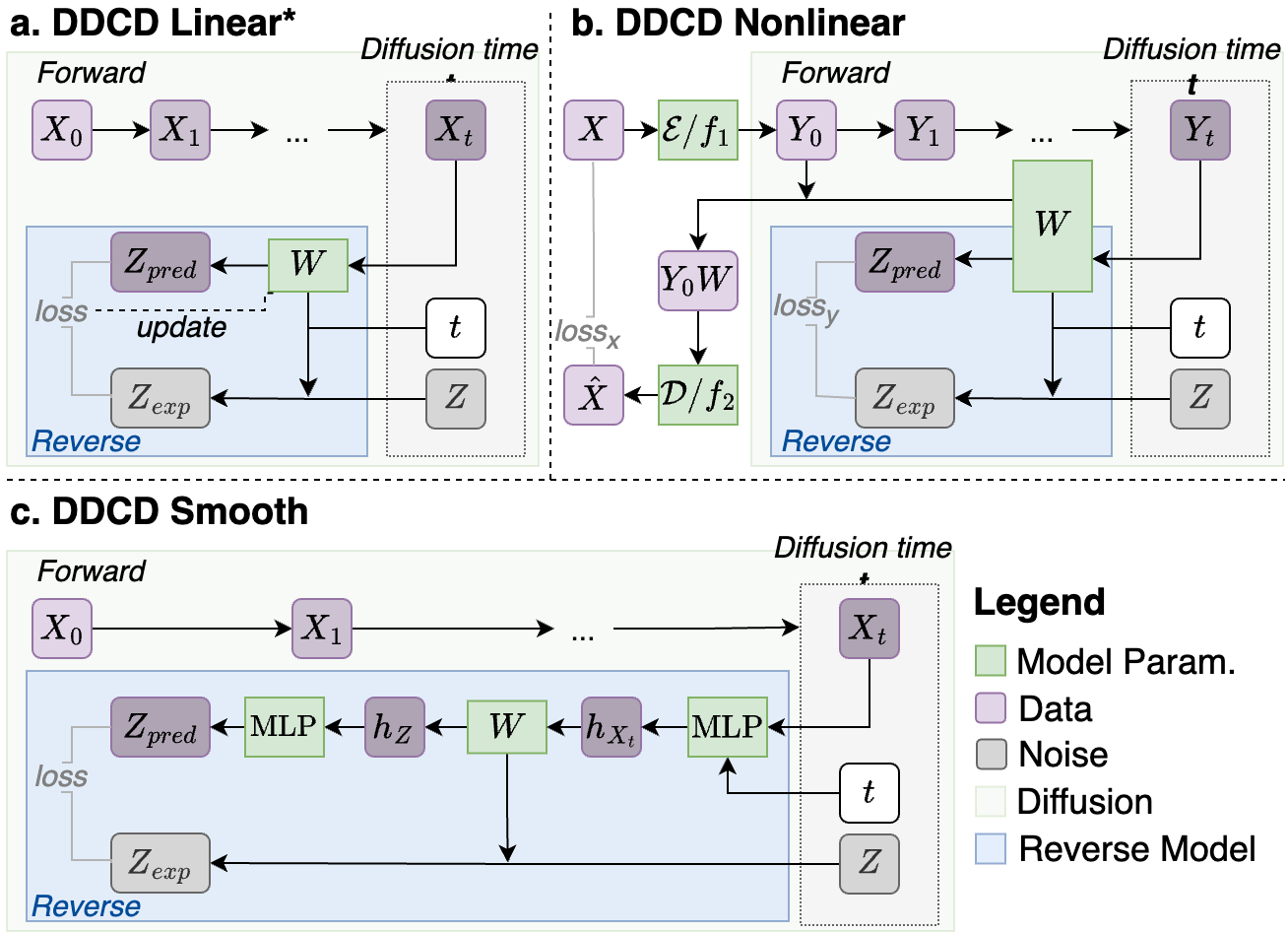}
\caption{Model architectures of proposed models in this paper. Note that the reverse path does not generate new samples and the only focus is to learn W via the denoising objective.}
\label{fig:model_architecture}
\end{figure}

\subsubsection{DDCD Linear}
The only trainable parameters in DDCD-Linear are values in the adjacency matrix $\mW$. The input of the model includes the perturbed $\mX_\vt$, the diffusion variance schedule $\sqrt{1-\overline{\alpha_\vt}}$, and the noise term $\mZ$. All three inputs are generated during the forward process given $\mX_\textbf{0}$. The model is trained with both L1 and L2 regularization on the adjacency matrix together with the k-hop acyclicity constraint. The model is optimized using the Adam optimizer with increasing weights on the DAG constraint. We will explain this in detail in section \ref{opt}.

\subsubsection{DDCD Nonlinear}

Applying denoising diffusion to non-linear SEMs presents a challenge in that the added noise $\mZ$ cannot be easily separated from $\mX$ in the original feature space. To address this, we adopt a Generalized SEM formulation. Here we assume the observed data $\mX$ is the result of a invertible element-wise nonlinear transformation of latent variable $\mY$, which interact via a linear causal mechanism. 

\begin{equation}
   \label{eqn:non_linear_1}
   \mY = \mY\mW + \mZ_{SEM}, \;\;\; \mX=f_2(\mY) + \mE
\end{equation}
\noindent where $\mZ_{SEM}$ is the noise in the latent causal system on $\mY$, $\mW$ is the weighted adjacency matrix encoding the causal DAG, and $f_2$ is a nonlinear mapping (decoder). To recover this structure, we introduce an inference network $f_1$ (encoder) such that $\mY=f_1(\mX)$. By jointly learning $f_1, f_2$ and $\mW$, we aim to recover the latent causal space $\mY$ where linear dependencies hold. The structural equation becomes:

\begin{equation}
   \label{eqn:non_linear_2}
   \mX = f_2(f_1(\mX)\mW) + \mE\text{, or } \mX = f_2(\mY \mW) + \mE_1
\end{equation} 

This formulation essentially treats the causal discovery problem as learning an autoencoder where the latent space is constrained by a structural prior $\mW$, which is identical to the work proposed in DAG-GNN \citep{yu2019dag}, GAE \citep{ng2019graph}, and recent robust variants for zero-inflated data \citep{zhu2025improved}.

The total training objective combines the nonlinear reconstruction loss, the latent diffusion denoising loss, and the DAG constraints:

\begin{multline}
   \label{eqn:ddcd_nonlinear_obj}
   \min_{W, f_1, f_2} 
\underbrace{\frac{1}{2n}\|X - f_2(f_1(X)W)\|_F^2}_{\text{SEM Reconstruction}} 
+ 
\underbrace{\|(Y_t - Y_t W) - \text{diag}(\sqrt{1-\overline{\alpha}_t})Z(I-W)\|_F^2}_{\text{Latent Denoising}} 
+ \mathcal{R}(W).
\end{multline}
\noindent where $\mathcal{R}(W)$ includes the sparsity and $k$-hop acyclicity constraints. By anchoring the latent representation $Y$ to a denoising objective, we encourage the model to learn a representation where the noise $Z$ is additive and Gaussian, aligning with the identifiability conditions for linear causal models.

This architecture, as shown in Figure \ref{fig:model_architecture}b, is very similar to a Latent Diffusion Model (LDM) \citep{rombach2022high}, where noise is added to the latent representation learned by an autoencoder. %
We can treat $\mY$ as the unobserved latent variable, and the learned nonlinear transformation functions $f_1$ and $f_2$ can be viewed as the encoder and decoder. Furthermore, the learned adjacency matrix $\mW$ for the SEM could be considered as a form of attention or graph neural network.

\subsubsection{DDCD Smooth}

While continuous optimization algorithms excel on synthetic SEMs, recent studies \citep{reisach2021beware,kaiser2021unsuitability} show that their performance often degrades on real-world datasets, where feature scales vary. \citet{reisach2021beware} demonstrated that these methods can exploit ``varsortability'' and infer directionality strictly from increasing variance instead of identifying true causal mechanisms. This is problematic because 1. it violates the equal-variance assumption implicit in standard SEM formulations; 2. real-world data rarely exhibits the same patterns as synthetic data \citep{kaiser2021unsuitability}. To address this limitation, we propose DDCD-Smooth. Instead of learning exact edge weights, DDCD-Smooth learns a normalized adjacency matrix where expected edge weights are scaled to $\frac{1}{d}$. We achieve this by employing an MLP with \textbf{Tanh activation to normalize all features to the range of $[-1, 1]$}. This normalization serves two purposes: (1) it prevents the method from exploiting variance differences for spurious identification, and (2) it makes the method robust to heterogeneous feature scales common in real-world data. An illustration of the architecture of this model is provided in Figure \ref{fig:model_architecture}c. In appendix \ref{ddcd_smooth_app}, we provide a proof showing that with normalized adjacency, we can directly estimate the noise $\mZ$ when the graph is large.

\subsection{Optimization} \label{opt}

In this experiment, the models are optimized using the Adam optimizer \citep{kingma2014adam} since it has fewer restrictions. Distinct from NOTEARS, we replace the dual-ascending augmented Lagrangian formulation with a linear penalty schedule for the acyclicity constraint (See Supplement Section \ref{sec:opt_sup} for our justification and empirical evidence). 

\section{Results} \label{experiments}

\subsection{Denoising Objective Smooths the Optimization Landscape}

\begin{figure*}[t]
\centering
\includegraphics[width=0.90\textwidth]{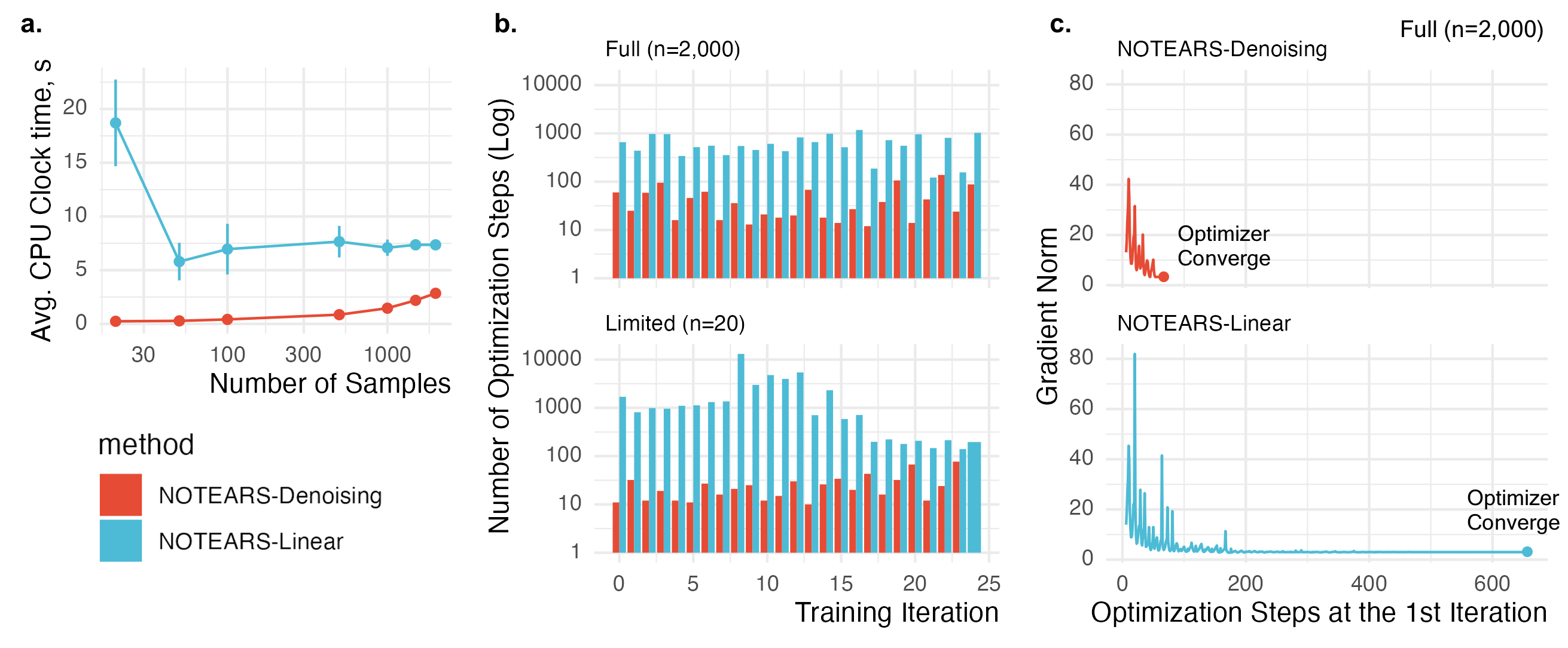}
\caption{\textbf{The denoising objective accelerates convergence by smoothing gradients.} (a) Average runtime vs. sample size (10 runs). (b) L-BFGS-B steps to convergence; NOTEARS-Denoising requires significantly fewer iterations. (c) Gradient norm trajectories during the first iteration on full data ($n=2,000$); the denoising objective yields a stable descent compared to the volatile gradients of the linear baseline. (First 5 steps omitted for clarity).}
\label{fig:notears_results}
\end{figure*}

We first compare NOTEARS-Denoising against NOTEARS-Linear to isolate the impact of the denoising objective on optimization. Except for the noise perturbation process and the denoising objective, both models are identical and are optimized using dual-ascending augmented Lagrangian with the L-BFGS-B optimizer. We evaluate performance on a synthetic Scale-Free graph (20 nodes, degree 10) and analyze gradient behavior and convergence speed.

Note that NOTEARS-Linear \citep{zheng2018dags} converges slowly when the number of samples is very limited (Figure \ref{fig:notears_results}a). This strange behavior occurs because the reconstruction loss landscape contains sharp local minima that produce volatile gradients during optimization. Recent work has also identified instabilities in the acyclicity constraint itself \citep{nazaret2023stable}, though these manifest differently (vanishing constraint gradients vs. volatile loss gradients). Our denoising objective addresses the loss landscape component: it takes many steps for the L-BFGS-B optimizer to converge in each iteration (as shown in Figure \ref{fig:notears_results}b and c). In contrast, the denoising objective inherently smooths these gradients, allowing the optimizer to bypass sharp minima and converge in a fraction of the steps. Additional comparisons are provided in Appendix section \ref{notears_supplement} and Figure \ref{fig:notears_more}.   

\begin{figure}[h]
\includegraphics[width=\textwidth]{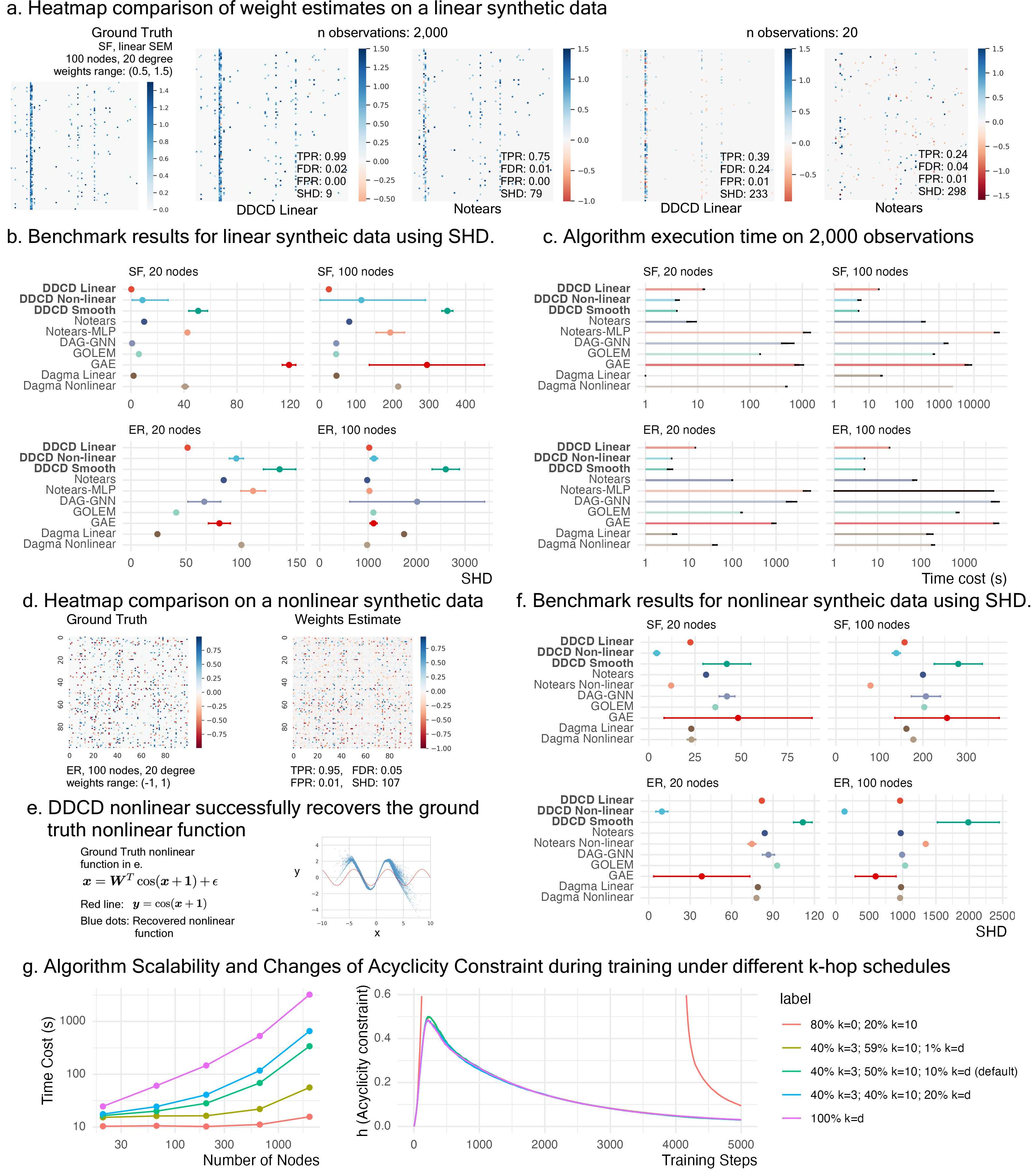}
\caption{DDCD Linear and Nonlinear models demonstrate robust scalability and accuracy on synthetic benchmarks. }
\label{fig:syn_results}
\end{figure}

\subsection{DDCD Demonstrates Robust Scalability and Accuracy on Synthetic Benchmarks}

We evaluated DDCD models across a range of synthetic Scale-Free (SF) and Erd\H{o}s-R\'enyi (ER) graphs (d = 20 - 5,000, degree = 10 - 500) and we compared the performances against a set of continuous optimization baselines, including NOTEARS \citep{zheng2018dags}, NOTEARS-MLP, DAG-GNN \citep{yu2019dag}, GOLEM \citep{ng2020role}, GAE \citep{ng2019graph}, and DAGMA \citep{bello2022dagma}. Baseline methods use default hyperparameters from their original papers. For DDCD, we selected hyperparameters via grid search (Supplementary Figure 8): $T=5000$, linear noise schedule, start noise=$10^{-4}$, end noise=$0.02$. The synthetic data were generated via additive noise models (ANMs) with Gaussian noise, using a linear and seven distinct nonlinear functions, consist of smooth ($\sin, \cos, \tanh$), non-smooth (ReLU), bounded (Sigmoid, $\tanh$), and unbounded (polynomial). Performance was assessed via Structural Hamming Distance (SHD), True Positive Rate (TPR), False Discovery Rate (FDR), False Positive Rate (FPR), and runtime (see Appendix \ref{A:metrics} for metric definitions and a note on identifiability). 

\textbf{Linear Structure Recovery.} On linear benchmarks, the performance of DDCD-Linear (converged at 5,000 steps) matches or exceeds state-of-the-art algorithms such as DAGMA and GOLEM (Figure \ref{fig:syn_results}b) with an  exception on the small ER graphs, while only consuming a fraction of time (Figure \ref{fig:syn_results}c). In terms of runtime, all DDCD models manage to converge within roughly 10 seconds, while other algorithms sometimes take 20 minutes to converge. Visual inspection (Figure \ref{fig:syn_results}a) also confirms that DDCD-Linear yield near perfect structure recovery when n is large (SHD = 9, n=2,000) while recovering most of the structure when n is small (SHD=233, n=20).

\textbf{Nonlinear Disentanglement.} On nonlinear benchmarks, DDCD-Nonlinear (converged at 1,000 steps) shows superior performance on structure recovery (Figure \ref{fig:syn_results}f) and inference speed. Visual inspection (Figure \ref{fig:syn_results}d) shows that it was able to recover most of the structures (SHD=107), even though it is a bit challenging to figure out the signs correctly. One unique feature of DDCD-Nonlinear is that it could learn a precise approximation of the underlying generation mechanism; as shown in (Figure \ref{fig:syn_results}e), the recovered function (blue dots) tightly fits the ground truth cosine transformation ($y=\cos(x+1)$), which further validates the model's capacity for joint structure and function learning.

\textbf{Computational Scalability.} One major advantage of the DDCD framework is its runtime efficiency. When the size of the graph increases from 20 to 100, we see dramatic increases in runtime for standard algorithms (e.g., NOTEARS: 7 seconds to 6 minutes for a SF graph); the runtime of DDCD-Linear increased from 13 seconds to 19 seconds (Figure \ref{fig:syn_results}c). The same patterns persist in ER graphs and under nonlinear mechanisms. Figure \ref{fig:syn_results}g shows the scale of time costs as we increase the size of the graph (the number of nodes increases from 20 to 2,000). With DDCD-Linear + the default k-hop schedule (40\% k=3; 50\% k=10; 10\% Full DAG), it only takes 5.7 minutes to finish 5,000 steps of inference on a graph with 2,000 nodes on a modern GPU (NVIDIA H-100).

\textbf{Adaptive k-hop acyclicity constraint helps resolve DAG violation.} To understand the impact of the k-hop schedule on the training dynamics, we studied the time costs and the changes of acyclicity constraint under a few different schedules and showed the results in Figure \ref{fig:syn_results}g. Applying different k-hop schedules significantly affects algorithm speed. For example, using global DAG checks throughout training (100\% k=d) requires 53.7 minutes for a 2,000 node graph, while our default schedule (40\% k=3; 50\% k=10; 10\% k=d) completes in 5.7 minutes—a nearly 90\% reduction. Importantly, the acyclicity constraint $h$ follows near-identical trajectories in both cases, confirming that the k-hop schedule achieves the same acyclicity guarantees with substantially lower computational cost. Detailed analysis of training dynamics under various k-hop schedules is provided in Supplement \S\ref{khop_schedule_supplement}.

\begin{figure}[t]
\centering
\includegraphics[width=0.87\linewidth]{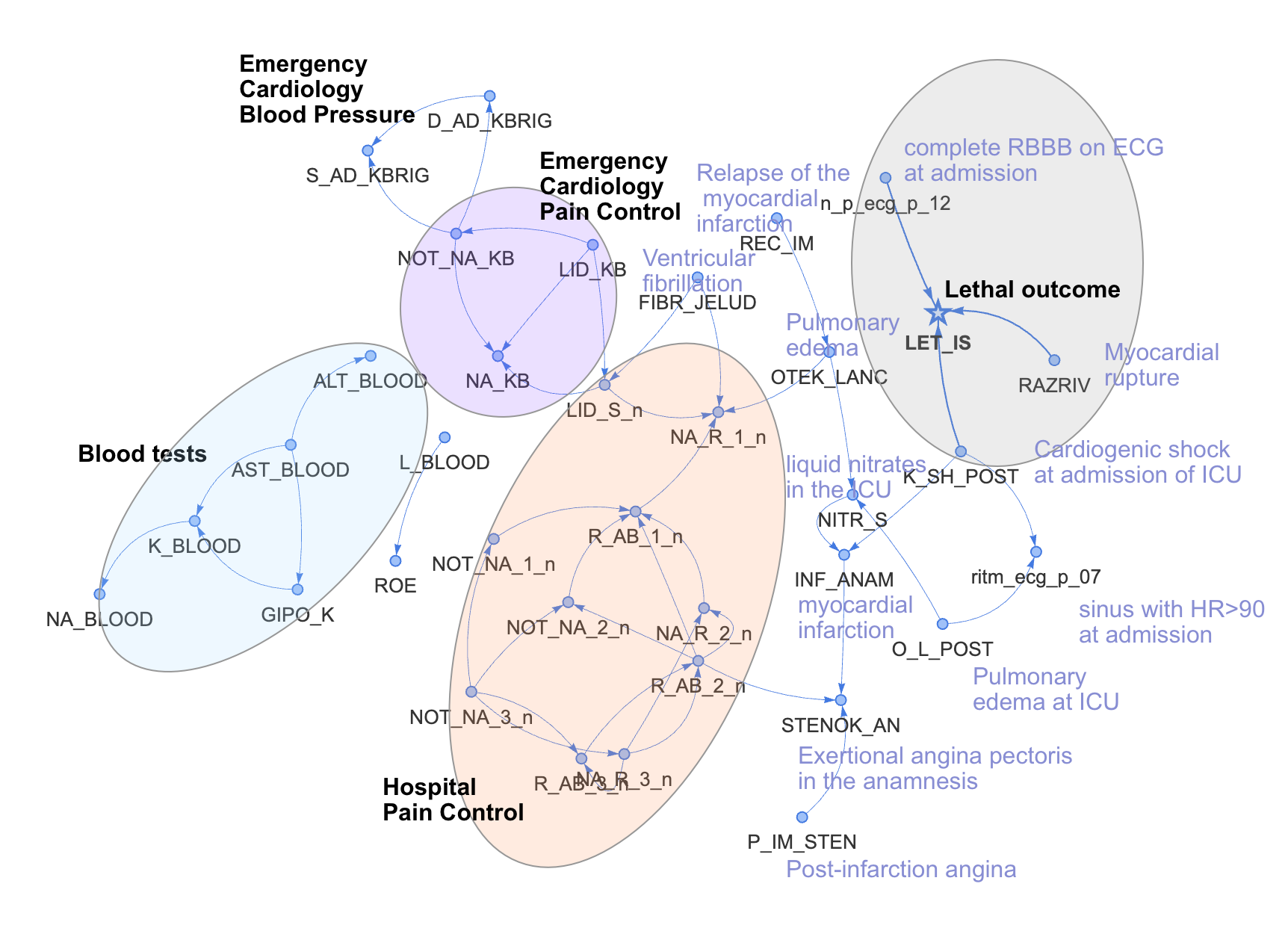}
\caption{Inferred Causal Network around Lethal Outcome in Myocardial Infarction}
\label{fig:mi}
\end{figure}

\subsection{Qualitative Analysis on Real Data}

In this section, we assess the performance of DDCD Smooth on the real-world myocardial infarction dataset. This ran in 9 seconds on an NVIDIA H100 GPU.  Due to the lack of ground truth networks for real data, we visualize regions of the inferred networks and assess quality via domain knowledge.

After training, we extracted all edges with weights above the cut-off threshold (0.2) in the inferred normalized adjacency matrix. This cut-off threshold was set by inspecting the distribution of edge weights and selecting the right tail. As an example, we examine a graph of the 2-hop neighbors around the Lethal Outcome (\texttt{LET\_IS}) node (Figure \ref{fig:mi}).  We identify with ovals several meaningful node clusters in this graph, including lethal outcome with its primary causes, critical conditions and interventions, hospital pain control, emergency cardiology pain control, and blood test results, purely based on the topological relationships among the nodes. The edge weights for a replicate of this network are provided in Appendix \ref{mi_edgeweights_ddcd}. In comparison, as shown in Appendix \ref{mi_edgeweights_notears} and Figure \ref{fig:notears_mi}, networks generated by NOTEARS appear less relevant.

In the network generated by DDCD Smooth, the three most important direct causes of lethal outcome include myocardial rupture, cardiogenic shock, and complete Right Bundle Branch Block (RBBB) on ECG at admission; all of these are known to be conditions with a poor prognosis.  Cardiogenic shock (\texttt{K\_SH\_POST}) is further shown to cause ``sinus ECG rhythm with heart rate $>$ 90" (\texttt{ritm\_ecg\_p\_07}), consistent with a high heart rate (tachycardia) being a symptom of cardiogenic shock \citep{kosaraju}.  Pulmonary edema (\texttt{OTEK\_LANC}) is shown to cause the use of liquid nitrates in the ICU (\texttt{NITR\_S}); this is indeed a common practice for rapidly managing pulmonary edema \citep{purvey}.  Other inferred edges include that NSAID drugs used by the emergency team (\texttt{NOT\_NA\_KB}) cause blood pressure to increase \citep{aljadhey2012comparative}, and that relapsing pain in the 2nd hospitalization period causes NSAID use in the same period.

There are also some node pairs for which plausible edges are inferred, but in an implausible direction. For example, in the lower right of the figure, ``post-infarction angina" (chest pain after the heart attack causing the current hospital admission) is shown to cause ``exertional angina pectoris in the anamnesis" (e.g., a reported history of chest pain after exercise), when the former clearly occurs after the latter.  Still, many of the directed edges appear consistent with known causal relationships.  Another similar analysis on real data on aging  is included in Appendix \ref{aging}.

\section{Discussion and Conclusion} \label{discussion}

In this work, we showed that we could leverage the denoising score-matching objectives of diffusion models to stabilize and accelerate causal structure learning. We demonstrated both theoretically and empirically that the denoising objective acts as a strong regularizer that smooths the loss landscape for faster and better convergence. It allowed the optimizer to avoid sharp local minima, particularly when the number of samples is limited. Furthermore, the introduction of the adaptive k-hop acyclicity curriculum provides a scalable solution to the computationally expensive matrix exponential. In our experiment, it helped reduce the training time by ~90\% while maintaining global acyclicity guarantees. With modern GPU support, the inference on a 2,000 node graph only took 5.7 minutes and the performance is on par with state of the art methods. 

The DDCD-Smooth model also demonstrated strong practical performance on real-world clinical data. By learning a normalized adjacency matrix, the model extracted biologically and medically plausible causal clusters from the myocardial infarction and aging datasets. While there are inherent limitations of inferring directionality from static, cross-sectional data, the clarity of the clusters produced by DDCD-Smooth significantly outperformed the hub-and-spoke artifacts seen in previous methods. Ultimately, DDCD demonstrates that the principles of diffusion modeling can be repurposed to provide a stable, scalable, and explainable path for discovering dependencies in complex tabular data. 

 \newpage

\bibliography{refs}

\newpage
\onecolumn

\appendix
\section{Appendix}

\subsection{Explanation of the k-hop acyclicity constraint} \label{khop_dag_app}

In this section, we explain how to derive the proposed k-hop acyclicity constraint in Equation \ref{eqn:khop_dag} from the NOTEARS DAG constraint in Equation \ref{eqn:notears_dag}. 

The NOTEARS DAG constraint is in the form of $h_{\text{NOTEARS}}(\mW) = \text{tr}(e^{\mW \circ \mW})-d$, where $\circ$ is the Hadamard product, $e^{\mW}$ is the matrix exponential of $\mW$, and $\text{tr}()$ is the trace of a matrix. In this case, matrix exponential is the sum of a weighted power series as shown below.

\begin{equation}
e^{\mW} = \sum_{j=0}^{\infty} \frac{1}{j!} \mW^j.
\label{eqn:matrix_exp}
\end{equation}

The trace of the summed matrix and the sum of all the traces are equivalent. At the same time, since $\mW^0$ is simply the identity matrix, whose trace equals to the value of $d$, we can rewrite the NOTEARS DAG function in the following form: 

\begin{equation}
h_{\text{NOTEARS}}(\mW) = \sum_{j=0}^{\infty} \frac{1}{j!} \text{tr}((\mW \circ \mW)^j) - d = \sum_{j=1}^{\infty} \frac{1}{j!} \text{tr}((\mW \circ \mW)^j) .
\label{eqn:notears_full}
\end{equation}

To account for cycles within $k$ hops, we compute: 

\begin{equation}
h_{\text{k-hop}}(\mW, k) = \sum_{j=1}^{k+1} \frac{1}{j!} \text{tr}((\mW \circ \mW)^j)
\end{equation}

As mentioned in the main text, in the case when values in the weighted adjacency matrix are tiny, it might be helpful to multiply the values in the adjacency matrix by a constant multiplier $\gamma$ and then remove it after the trace calculation.  Then the equation can be expressed in the following form: 

\begin{equation}
   h_{\text{k-hop}}(\mW, k, \gamma) = \sum_{j=1}^{k+1}\frac{1}{j!\gamma^{2j}}\text{tr}((\gamma\mW \circ \gamma\mW)^j)
\end{equation}

If we keep a running product for $j!$, $\gamma^{2j}$, and $(\gamma\mW \circ \gamma\mW)^j$, we can keep the complexity within $\mathcal{O}(d^2)$. 

\subsection{Special Case in DDCD Smooth} \label{ddcd_smooth_app}

In DDCD Smooth, all the inputs are transformed into the range of -1 to 1 through MLP and the Tanh activation function.  We expected to learn a normalized adjacency matrix $\hat{\mW}$, where the expected value is $\frac{1}{d}$. This normalized adjacency matrix would be conceptually similar to the normalized adjacency matrix in graph convolution \citep{kipf2016semi}. Under these assumptions, Theorem 2 shows that we can  predict the added noise $\mZ$ directly.

\begin{theorem}
   With a normalized adjacency matrix, we can directly infer the added noise $\mZ$.
\end{theorem}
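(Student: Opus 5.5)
The plan starts from the identity obtained in the proof of Theorem~\ref{theorm1}, applied to the Tanh-normalized features (which we continue to write $\mX$). Writing $D_1=\text{diag}(\sqrt{\overline{\alpha_\vt}})$ and $D_2=\text{diag}(\sqrt{1-\overline{\alpha_\vt}})$, we have
\begin{equation*}
\mX_\vt - \mX_\vt\hat{\mW} = D_1(\mX_{\textbf{0}}-\mX_{\textbf{0}}\hat{\mW}) + D_2\mZ - D_2\mZ\hat{\mW}.
\end{equation*}
The claim ``we can directly infer $\mZ$'' will be read as follows: when $d$ is large, the term $\mZ\hat{\mW}$ becomes negligible. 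The denoising target then reduces from $D_2\mZ(\mI-\hat{\mW})$ to $D_2\mZ$. This is the standard $\epsilon$-prediction target of a DDPM, so the reverse model can be trained to predict $\mZ$ itself rather than $\mZ(\mI-\hat{\mW})$.

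The key step is to bound $\mZ\hat{\mW}$ entrywise. Its $(i,j)$ entry is $\sum_{l=1}^d Z_{il}\hat W_{lj}$, a sum of $d$ independent $\gN(0,1)$ variables with weights $\hat W_{lj}$. So it is Gaussian with mean $0$ and variance $\sum_l \hat W_{lj}^2=\|\hat{\mW}_{:,j}\|_2^2$. Under the normalization assumption, the weights are $O(1/d)$ with expected value $1/d$, which I would formalize as $|\hat W_{lj}|\le c/d$. Then $\|\hat{\mW}_{:,j}\|_2^2\le c^2/d$, so each entry of $\mZ\hat{\mW}$ has standard deviation $O(d^{-1/2})$. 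By contrast, each entry of $\mZ$ has variance $1$. Chebyshev's inequality, or a Gaussian tail bound with a union bound over the $n\times d$ entries, shows that $\mZ\hat{\mW}\to 0$ in probability relative to $\mZ$ as $d\to\infty$. In Frobenius norm, $\mathbb{E}\|\mZ\hat{\mW}\|_F^2=n\|\hat{\mW}\|_F^2 = O(n)$, while $\mathbb{E}\|\mZ\|_F^2=nd$, so the relative error is $O(1/d)$.

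Substituting this back gives
\begin{equation*}
D_1(\mX_{\textbf{0}}-\mX_{\textbf{0}}\hat{\mW}) = (\mX_\vt-\mX_\vt\hat{\mW}) - D_2\mZ + o_P(1).
\end{equation*}
Minimizing $\|(\mX_\vt-\mX_\vt\hat{\mW}) - D_2\mZ\|_F^2$ is therefore asymptotically equivalent to the denoising objective of Theorem~\ref{theorm1}, and hence to the SEM loss. In other words, the model's residual $\mX_\vt-\mX_\vt\hat{\mW}$ directly estimates the scaled noise $D_2\mZ$. The Tanh bound $|x|\le 1$ on the features is used to keep the $D_1$ term bounded, so that it does not interfere with this approximation.

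The main obstacle is making ``normalized adjacency'' precise enough to control $\|\hat{\mW}_{:,j}\|_2$. An expected value of $1/d$ alone does not prevent a single large entry in a column. I would first assume bounded row and column sums, $\sum_l|\hat W_{lj}|\le c$, as in a GCN-normalized adjacency. This gives $\|\hat{\mW}_{:,j}\|_2^2\le c\max_l|\hat W_{lj}|$. I would then add a mild no-hub condition, $\max|\hat W_{lj}|=o(1)$, which is enough to obtain the $o(1)$ variance bound.
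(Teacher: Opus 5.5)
\textbf{Verdict: genuine gap in the last step.} The core of your argument matches the paper's, but the step from ``the targets agree'' to ``the minimizers agree'' does not follow.

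Your central step is the paper's own. The paper writes $\mW^\top\vx_t=\sqrt{\overline{\alpha_t}}\,\mW^\top\vx_0+\sqrt{1-\overline{\alpha_t}}\,\mW^\top\vz$. It argues that the last term has variance of order $(1-\overline{\alpha_t})/d$, so it is negligible next to the unit-variance $\vz$. Assuming also a small diffusion coefficient, it concludes that $\vx_t-\mW^\top\vx_t$ estimates the added noise. This is exactly your removal of $D_2\mZ\hat{\mW}$ from $\mX_\vt-\mX_\vt\hat{\mW}$.

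Your column-norm bound is more careful than the paper's computation, which in effect replaces $\mathbb{E}[\hat W_{lj}^2]$ by $(\mathbb{E}\hat W_{lj})^2$. You are also right that a mean weight of $1/d$ alone does not control $\|\hat{\mW}_{:,j}\|_2$. Up to the conclusion that the target $D_2\mZ(\mI-\hat{\mW})$ can be replaced by $D_2\mZ$ with relative error $O(1/d)$, the argument is fine.

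The gap is the next inference: that minimizing $\|(\mX_\vt-\mX_\vt\hat{\mW})-D_2\mZ\|_F^2$ is asymptotically equivalent to the Theorem~1 objective, and hence to the SEM loss. Because $\mZ$ is zero-mean and independent of $\mX_{\textbf{0}}$ and of the current $\hat{\mW}$, one has exactly, for every $d$,
\begin{equation*}
\mathbb{E}_{\mZ}\bigl\|(\mX_\vt-\mX_\vt\hat{\mW})-D_2\mZ\bigr\|_F^2=\bigl\|D_1(\mX_{\textbf{0}}-\mX_{\textbf{0}}\hat{\mW})\bigr\|_F^2+\text{tr}(D_2^2)\,\|\hat{\mW}\|_F^2 .
\end{equation*}
So your objective is the Theorem~1 objective plus a ridge penalty.

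That penalty looks small only relative to the total loss. The total is dominated by the $\hat{\mW}$-independent SEM-noise term $\sum_j\sigma_j^2=\Theta(d)$, where $\sigma_j^2$ are the SEM noise variances. The argmin is decided by the $\hat{\mW}$-dependent part, and there the penalty is not small.

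Concretely, take all samples at one step $t$ and a fixed support. Column $j$ of your minimizer is then the ridge estimate $\bigl(\overline{\alpha_t}\hat{\Sigma}+(1-\overline{\alpha_t})\mI\bigr)^{-1}\overline{\alpha_t}\hat{\Sigma}\,w_j$. Here $w_j$ is the least-squares SEM coefficient and $\hat{\Sigma}$ is the regressors' second-moment matrix. Along an eigendirection of $\hat{\Sigma}$ with eigenvalue $\lambda$, the estimate is shrunk by the factor $\overline{\alpha_t}\lambda/(\overline{\alpha_t}\lambda+1-\overline{\alpha_t})$. Features in $[-1,1]$ give eigenvalues averaging at most $1$, so this bias does not vanish as $d\to\infty$.

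To close the gap, you have two options:
\begin{itemize}
\item Add the hypothesis $1-\overline{\alpha_t}\to 0$. This is precisely the ``small diffusion coefficient'' condition the paper's proof invokes alongside large $d$.
\item State the exact result instead: predicting $\mZ$ directly equals the SEM denoising objective plus an extra L2 penalty $\frac{1}{2n}\text{tr}(D_2^2)\|\hat{\mW}\|_F^2$. This needs neither the normalization nor large $d$; normalization only makes the penalty small in value.
\end{itemize}

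Relatedly, the Tanh bound keeps $D_1(\mX_{\textbf{0}}-\mX_{\textbf{0}}\hat{\mW})$ bounded but not small. At the true graph it equals $D_1\mE$, so the residual estimates $D_2\mZ$ only up to the SEM noise.
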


\begin{proof}
   Starting from Equation \ref{eqn:diffusion_forward_matrix_w}, let's pick a random sample $\vx$ and perturb that with a Gaussian noise vector $\vz \in \mathcal{N}(0, 1)$ to build $\vx_t$. 

   \begin{equation}
   \label{eqn:diffusion_forward_matrix_w_sample}
   \mW^T\vx_t = \sqrt{\overline{\alpha_t}}\mW^T\vx_0 + \sqrt{1-\overline{\alpha_t}}\mW^T\vz,
    \end{equation}

We can in fact write each element in $\mW^T\vz$ as a form of weighted Gaussian mixtures. Since all values in $\vz$ are standard Gaussian noise with a mean of 0 and variance of 1, the weighted sum of such a mixture will also be centered at 0.  Given that the expected value of edge weight in $\mW$ is $\frac{1}{d-1}$ and there are $d-1$ entries, the expected value for the entire variance is $\sum_{i=0}^d \frac{1-\overline{\alpha_t}}{d^2} = \frac{1-\overline{\alpha_t}}{d}$. When $d$ is large, this variance of $\mW^T\vx_t$ will be much smaller than the variance term in $\vx_t$, which is $1-\overline{\alpha_t}$. When $d$ is really large and the diffusion coeffient is small, we can therefore use $\mW^T\vx_t$ to approximate the unperturbed $\vx_0$. This argument is very similar to the Central Limit Theorem, but on noise. As a result,  $\mW^T\vx_t - \vx_t$ will give us a close estimate of the added noise $z$. 

\end{proof}

\subsection{Performance on Large Graphs}

Here, we include two sample weight estimates on larger graphs with 1,000 nodes. The main structures of the graphs are recovered (Figure ~\ref{fig:large}). 

\begin{figure}[ht]
\centering
\includegraphics[width=4in]{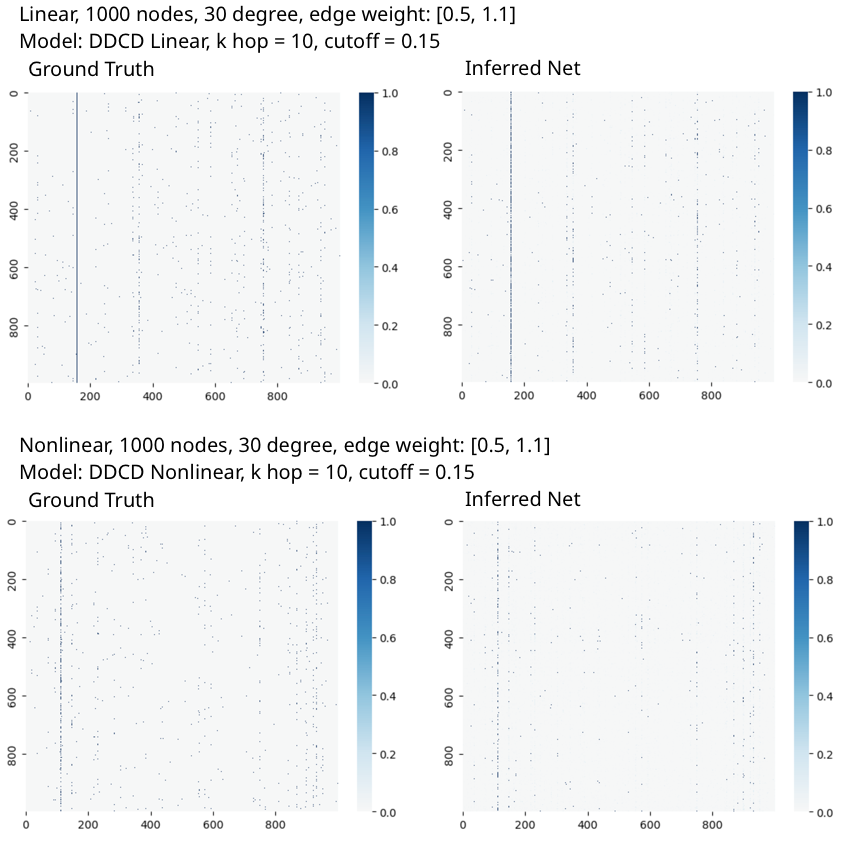}
\caption{Example Weight Estimates on Graphs with 1,000 nodes. Number of samples in both cases is 2,000.}
\label{fig:large}
\end{figure}

\subsection{Metrics}  
\label{A:metrics}

Since the inferred graphs are directed graphs, we use the same evaluation methods used in NOTEARS. Since we are not generating non-directed edge predictions at all, here is a simplified description of the metrics that we are using. 

\begin{enumerate}

\item True Positive Rate (TPR) is defined as 
\begin{equation}
\text{TPR} = \frac{\text{True Positive}} {\text{Condition Positive}}
\end{equation}
True positive is the number of cases when the predicted association exists in the condition in the correct direction. Condition positive is the total number of true edges in the ground truth graph.  

\item False Discovery Rate (FDR) is defined as
\begin{equation}
\text{FDR} = \frac{\text{False Positive} + \text{Reverse}} {\text{Prediction Positive}}
\end{equation}
False positive is the number of cases when the predicted association does not exist in the condition. Reverse is the number of cases when the predicted association exists in the condition but in the opposite direction. Prediction Positive is the total number of positive predictions in the inferred graph.  

\item False Positive Rate (FPR) is defined as
\begin{equation}
\text{FPR} = \frac{\text{False Positive} + \text{Reverse}} {\text{Condition Negative}}
\end{equation} 
Condition negative is the total number of edges that do not exist.   

\item Structural Hamming Distance (SHD) is a measure used to quantify the difference between two directed acyclic graphs (DAGs). It counts the number of operations, including adding an edge, removing an edge, and reversing an edge, required to transform one graph into another. Here, the SHD is the sum of reversed positive predictions, false positive predictions regardless of direction, and false negative predictions regardless of direction.  

\end{enumerate}

\textbf{Note on Identifiability and SHD:} We report SHD against the ground-truth DAG. For linear models with Gaussian noise, the causal structure is generally identifiable only up to the Markov Equivalence Class (MEC), meaning some edge reversals may not be true errors. Following standard practice in continuous DAG learning \citep{zheng2018dags,ng2020role,bello2022dagma}, we report SHD against the ground-truth DAG to enable direct comparison with prior work. For nonlinear models with additive noise, identifiability of the full DAG follows from ANM assumptions \citep{peters2014causal}, even with Gaussian noise, because the nonlinearity breaks the symmetry that causes equivalence classes in the linear case.

\subsection{Inferred examples from NOTEARS-Denoising} \label{notears_supplement}

\begin{figure}[htb]
\includegraphics[width=\textwidth]{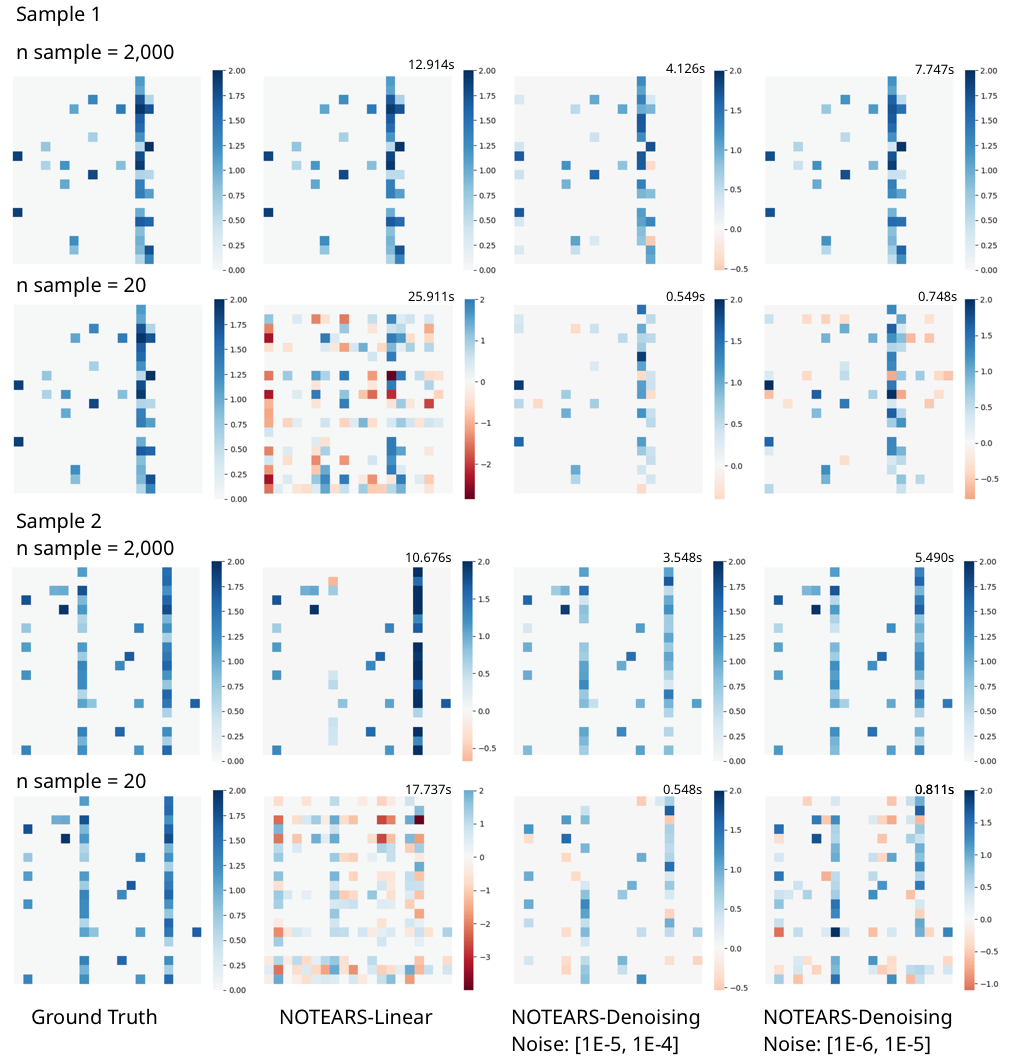}
\caption{Comparison of some results from NOTEARS-Linear and NOTEARS-Denoising. Execution times are displayed on the top-right corner of each figure. }
\label{fig:notears_more}
\end{figure}

In Figure \ref{fig:notears_more} we show some additional comparisons between results from NOTEARS-Linear and NOTEARS-Denoising. Overall, as reported in the main paper, when the number of samples is limited (2nd and 4th rows), the results from NOTEARS-Linear (2nd column) may include a lot of noise. This could be resolved by using the denoising diffusion objective (3rd and 4th columns). When the number of samples is sufficient (1st and 3rd rows), in most cases, NOTEARS-Linear will generate very good inference but in some cases, such as Sample 2 in rows 3 and 4, it may end up in a local minimum. On the other hand, using the denoising objective does come with a cost. When the added noise is not small enough, it may introduce some small noisy values in the inferred matrix. This could be resolved by adding smaller noises instead (column 4) but smaller noises will also increase the runtime.

\subsection{Linear multiplier outperforms Augmented Lagrangian in optimization} \label{sec:opt_sup}

Another experiment design we tested involves replacing the dual-ascending augmented Lagrangian optimization with a simple linear multiplier using training epoch steps. Here We justify this simplification for the following reasons, along with empirical evidence: \textbf{1. Smoother Optimization and Better Convergence:} As shown in Figure \ref{fig:notears_results}, the denoising objective effectively smooths the loss gradient. This stability allows the optimizer to converge effectively using a monotonic linear schedule. \textbf{2. Bounded Edge Weights:} In models like DDCD-Smooth, feature normalization restricts edge weights to a small range (typically $|w_{ij}| < 1$). The values of the quadratic penalty term of the augmented Lagrangian $\rho\|h(\mW)\|^2$ become extremely small and cannot be used as a penalty. 

In our empirical experiment, we found that while the Augmented Lagrangian approach maintains a very low FDR and FPR, its TPR degrades significantly as the graph node count increases. In contrast, the linear multiplier maintains high discovery power with a TPR above 0.95 and maintains a substantially lower Structural Hamming Distance (SHD) across all scales. Furthermore, the linear approach offers a more efficient optimization path, consistently reducing total computation time. This experiment clearly shows that at least for our diffusion setup, linear multiplier is a better choice compared with the Augmented Lagrangian. 

\begin{table}[h]
\centering
\caption{Optimization Strategy Comparison: Augmented Lagrangian vs. Linear Multiplier} \label{tab:opt}
\begin{tabular}{lrlclllll}
\toprule
Optimizer & Nodes & SHD & TPR & FPR & FDR & Time(s) \\
\midrule
Augmented Lagrangian & 20  & 32.5±3.0   & 0.45±0.03 & \textbf{0.01±0.01} & \textbf{0.04±0.02} & 15.8±8.2 \\
                     & 50  & 135.8±11.7 & 0.23±0.04 & \textbf{0.00±0.00} & \textbf{0.10±0.08} & 13.8±0.8 \\
                     & 100 & 328.5±18.1 & 0.13±0.01 & \textbf{0.00±0.00} & \textbf{0.15±0.03} & 27.4±8.1 \\
\midrule
Linear Multiplier    & 20  & \textbf{10.7±5.9}  & \textbf{0.96±0.04} & 0.07±0.04 & 0.14±0.07 & \textbf{8.9±1.7} \\
                     & 50  & \textbf{46.2±12.5} & \textbf{0.97±0.01} & 0.04±0.01 & 0.21±0.05 & \textbf{12.8±2.1} \\
                     & 100 & \textbf{139.3±16.2}& \textbf{0.96±0.01} & 0.03±0.00 & 0.26±0.03 & \textbf{15.9±3.3} \\
\bottomrule
\end{tabular}
\end{table}

\subsection{Detailed analysis of the k-hop schedule Training Dynamics} \label{khop_schedule_supplement}

\begin{figure}[htb]
\includegraphics[width=\textwidth]{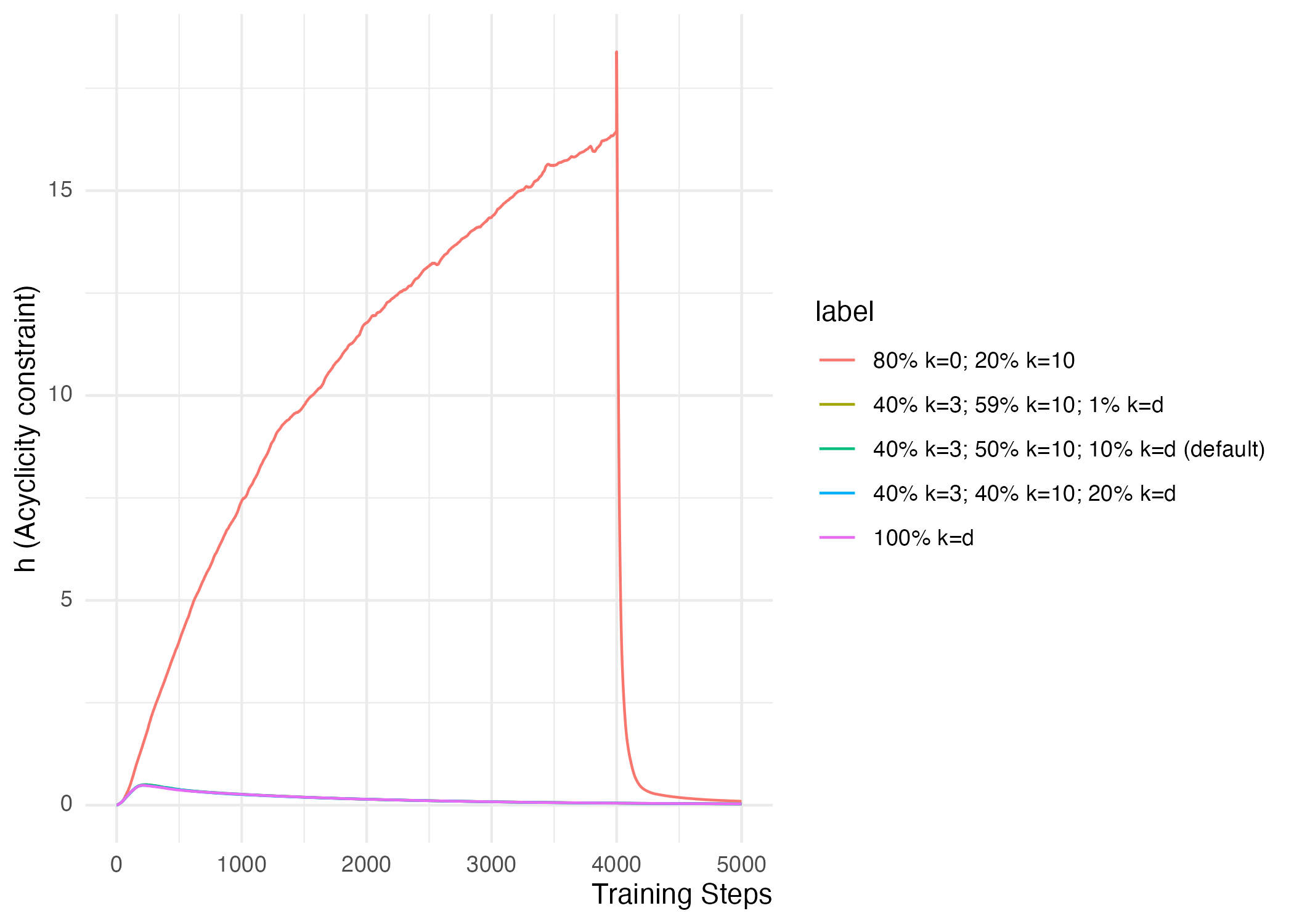}
\caption{Change of DAG constraint h under different k-hop schedules}
\label{fig:k}
\end{figure}

In this section, we provide detailed analysis of how different k-hop schedules affect training dynamics and the acyclicity constraint. This experiment was done on a 100-node SF graph with linear synthetic data, and we used DDCD-Linear to illustrate the point. As shown in the figure, applying different k-hop schedules can significantly affect the speed of the algorithm. For example, in the case where we applied global DAG check all the time (100\% k=d), the entire training process for a 2,000 node graph would take 53.7 minutes while if we used the default schedule (40\% k=3; 50\% k=10; 10\% k=d), we could reduce the time to 5.7 minutes, nearly 90\% reduction. At the same time, the movements of acyclicity constraint $h$ during training are near identical in these two cases. 

We also conducted an experiment where we applied no DAG constraint at the first 80\% of training steps and $k=10$ for the last 20\%, the acyclicity score $h$ skyrocketed at the beginning but dropped quickly after we put on the constraint (after step 4,000) (Figure \ref{fig:k}). In the cases where we did apply some acyclicity constraint at the beginning, the computed $h$ still rapidly increased from 0.0 to 0.5 in all cases, because at this stage, the SEM reconstruction loss is much larger than the acyclicity constrain and is the main factor that drives the direction of the optimizer. After this initial stage, the training starts to balance the power of SEM reconstruction and DAG constraint. Based on these observations, we concluded that 1. the sharp increase of $h$ could be prevented by applying a local acyclicity constraint; 2. spending resources on computing the exact value of global DAG constraint is not necessary for early stage training; 3. using a k-hop acyclicity schedule is as effective as full DAG calculation while consuming a fraction of time.

\subsection{Hyperparameter Search}

To study the sensitivity of DDCD to diffusion noise parameters, we conducted a comprehensive grid search. All experiments were performed on synthetic SF graphs with 50 nodes, degree 10, and 2,000 samples using linear SEMs with Gaussian noise. 

Here we explored:
\begin{itemize}
    \item Noise Schedule: Linear, Cosine, Power
    \item Diffusion timesteps $T$: 1000, 2500, 5000, 10000
    \item Start noise: 1e-5, 1e-4, 1e-3
    \item End noise: 0.01, 0.02, 0.05, 0.1
\end{itemize}

Based on our experiment, all three schedules achieved comparable SHD performances as shown in Figure \ref{fig:hyper}a. The cosine schedule seems to yield overall better performance, but it's not statistically significant. We also observe similar performances across Diffusion Steps $T$, suggesting that DDCD is robust across noise schedules and diffusion steps. 

We also did hyperparameter search in terms of the start noise and end noise and the results are included in Figure \ref{fig:hyper}b. The best performing combination happens when Start Noise is 1e-4 or 1e-5 and End Noise is 1e-2. Note that, in our other experiments, we followed standard Diffusion literature and use $T=5,000$, Start Noise = 1e-4 and End Noise = 0.02. This is not the optim setup but yield similar performance.  

\begin{figure}[t]
\centering
\includegraphics[width=\linewidth]{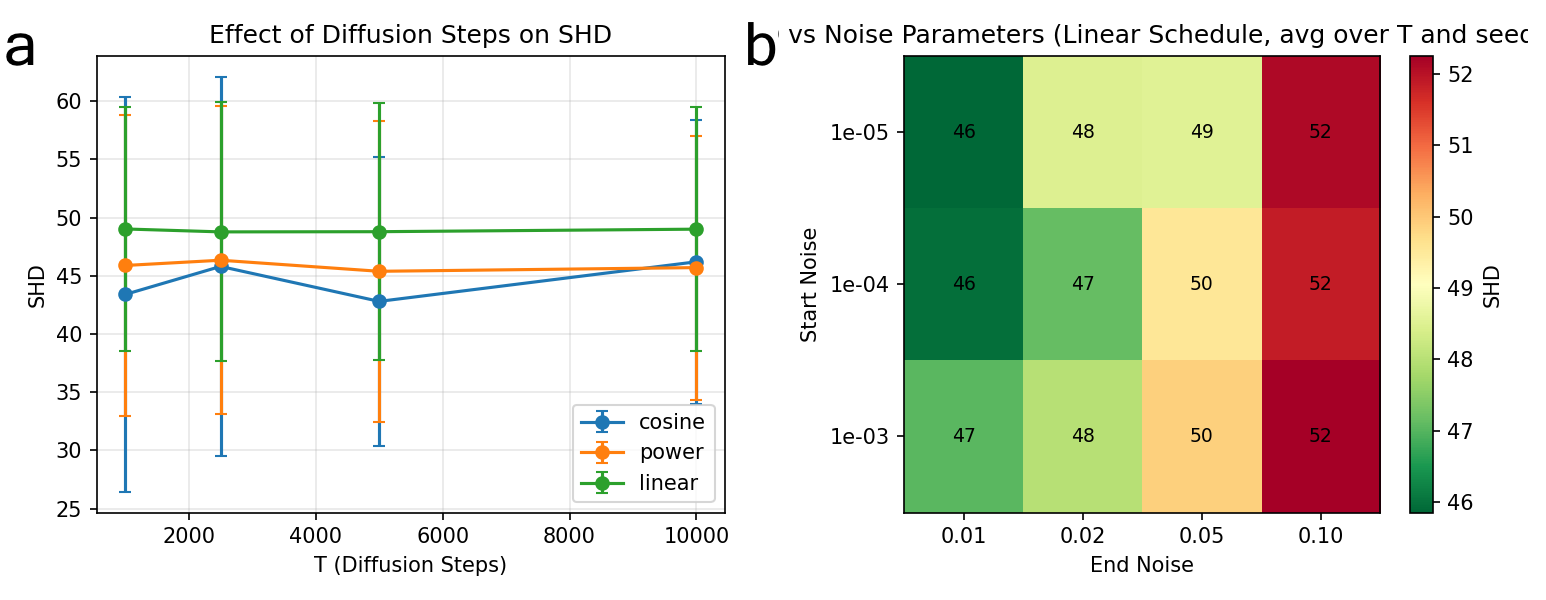}
\caption{Effect of Diffusion timestep and noise schedule}
\label{fig:hyper}
\end{figure}

\subsection{Directionality Analysis}

We investigated whether the gradient smoothing provided by the denoising objective affects directionality. To isolate this effect, we decomposed the SHD metric into false positives, false negatives, and edge reversals. In both low-sample and high-sample conditions, DDCD persistently reduces directional errors compared to NOTEARS (Figure \ref{fig:reversed}). These findings indicate that the denoising objective is, in fact, helpful for the correctness of directionality. 

\begin{figure}[t]
\centering
\includegraphics[width=\linewidth]{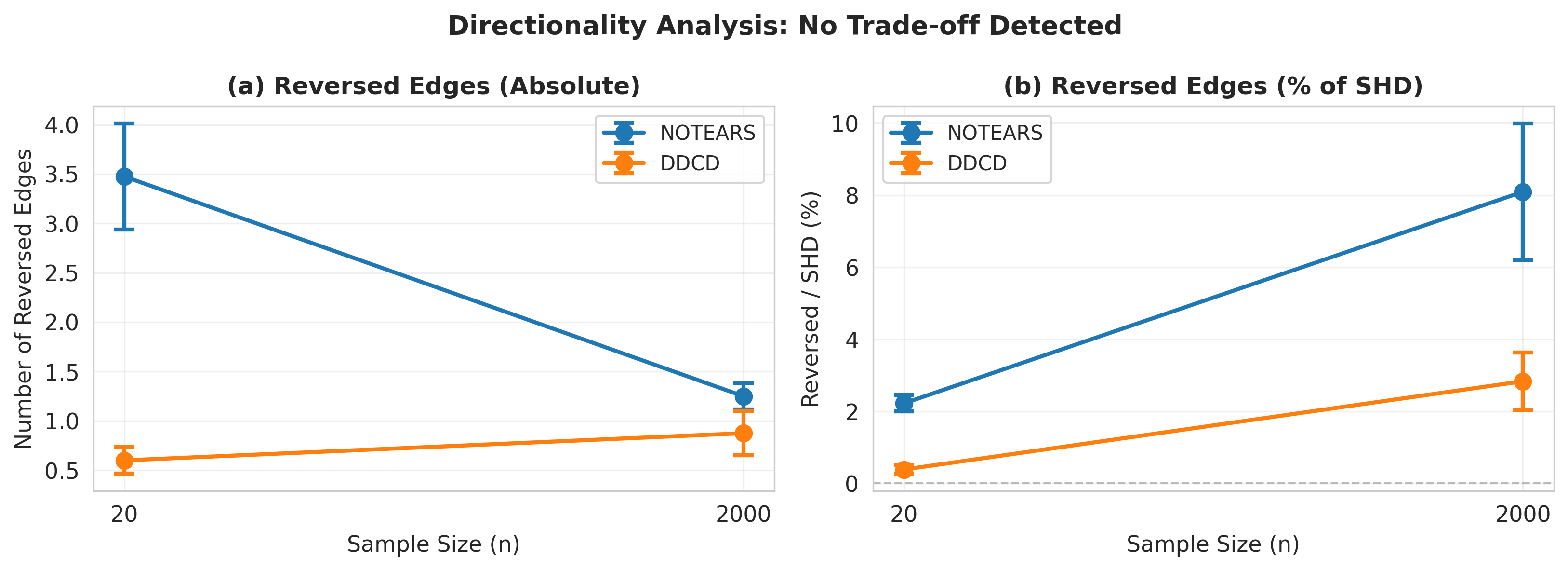}
\caption{Reversed Edge Analysis}
\label{fig:reversed}
\end{figure}

\subsection{Full results for main benchmark experiments (Figure 3)}
\fontsize{8}{11}\selectfont

\begin{longtable}{rrlllllll}
\caption{Main benchmark results (SHD, TPR, FDR, FPR across all methods)}\\
\hline
N & Data & Graph & Method & SHD & TPR & FDR & FPR & Time(s) \\
\hline
\endfirsthead
\hline
N & Data & Graph & Method & SHD & TPR & FDR & FPR & Time(s) \\
\hline
\endhead
\hline
\endfoot

2000 & linear & SF-20 & DAG-GNN & 0.9±1.0 & 0.99±0.01 & 0.02±0.02 & 0.01±0.01 & 548.8 \\
2000 & linear & SF-20 & DDCD Linear & \textbf{0.3±0.9} & \textbf{1.00±0.01} & 0.01±0.02 & \textbf{0.00±0.01} & 13.1 \\
2000 & linear & SF-20 & DDCD Non-linear & 9.0±18.3 & 0.87±0.25 & 0.07±0.19 & 0.02±0.04 & 4.1 \\
2000 & linear & SF-20 & DDCD Smooth & 50.7±7.0 & 0.17±0.16 & 0.73±0.18 & 0.17±0.06 & 4.0 \\
2000 & linear & SF-20 & GAE & 139.3±3.2 & 0.60±0.03 & 0.86±0.00 & 1.43±0.10 & 890.5 \\
2000 & linear & SF-20 & GOLEM & 6.0±0.0 & 0.92±0.00 & 0.08±0.00 & 0.03±0.00 & 156.4 \\
2000 & linear & SF-20 & Notears & 7.0±0.0 & 0.88±0.00 & 0.06±0.00 & 0.02±0.00 & 7.8 \\
2000 & linear & SF-20 & Notears Non-linear & 42.6±1.1 & 0.48±0.01 & 0.51±0.01 & 0.19±0.01 & 1247.4 \\
2000 & linear & SF-20 & PC & 27.0±0.0 & 0.60±0.00 & 0.21±0.00 & 0.06±0.00 & 4.6 \\
2000 & linear & SF-20 & DAGMA Linear & 2.0±0.0 & 0.96±0.00 & \textbf{0.00±0.00} & \textbf{0.00±0.00} & \textbf{1.0} \\
2000 & linear & SF-20 & DAGMA Non-linear & 40.1±2.1 & 0.37±0.04 & 0.37±0.04 & 0.08±0.01 & 581.9 \\
2000 & linear & SF-20 & GES & 74.0±0.0 & 0.73±0.00 & 0.65±0.00 & 0.51±0.00 & 19.9 \\
\addlinespace
2000 & linear & SF-100 & DAG-GNN & 45.4±4.4 & 0.91±0.01 & 0.05±0.01 & \textbf{0.00±0.00} & 1607.6 \\
2000 & linear & SF-100 & DDCD Linear & \textbf{25.1±4.3} & \textbf{0.98±0.00} & 0.07±0.01 & \textbf{0.00±0.00} & 19.0 \\
2000 & linear & SF-100 & DDCD Non-linear & 101.7±122.9 & 0.77±0.24 & 0.13±0.21 & 0.01±0.01 & \textbf{5.3} \\
2000 & linear & SF-100 & DDCD Smooth & 349.8±15.0 & 0.05±0.03 & 0.87±0.06 & 0.02±0.00 & 5.0 \\
2000 & linear & SF-100 & GAE & 527.4±535.4 & 0.45±0.30 & 0.49±0.29 & 0.09±0.14 & 7076.6 \\
2000 & linear & SF-100 & GOLEM & 45.0±0.0 & 0.87±0.00 & \textbf{0.01±0.00} & \textbf{0.00±0.00} & 714.2 \\
2000 & linear & SF-100 & Notears & 71.0±0.0 & 0.81±0.00 & 0.04±0.00 & \textbf{0.00±0.00} & 360.7 \\
2000 & linear & SF-100 & Notears Non-linear & 193.3±38.8 & 0.58±0.11 & 0.31±0.06 & 0.02±0.00 & 45896.1 \\
2000 & linear & SF-100 & PC & 294.0±0.0 & 0.28±0.00 & 0.52±0.00 & 0.02±0.00 & 52.8 \\
2000 & linear & SF-100 & DAGMA Linear & 46.0±0.0 & 0.86±0.00 & \textbf{0.01±0.00} & \textbf{0.00±0.00} & 22.6 \\
2000 & linear & SF-100 & DAGMA Non-linear & 213.6±5.0 & 0.34±0.01 & 0.08±0.02 & \textbf{0.00±0.00} & 2385.2 \\
2000 & linear & SF-100 & GES & 98.0±0.0 & 0.94±0.00 & 0.24±0.00 & 0.02±0.00 & 14992.6 \\
\addlinespace
2000 & linear & ER-20 & DAG-GNN & 66.4±14.9 & 0.58±0.21 & 0.31±0.05 & 0.29±0.10 & 2360.7 \\
2000 & linear & ER-20 & DDCD Linear & 51.3±1.1 & \textbf{0.91±0.01} & 0.32±0.00 & 0.48±0.01 & 14.0 \\
2000 & linear & ER-20 & DDCD Non-linear & 99.2±6.7 & 0.35±0.10 & 0.51±0.06 & 0.38±0.07 & 4.0 \\
2000 & linear & ER-20 & DDCD Smooth & 134.6±14.6 & 0.42±0.19 & 0.75±0.08 & 1.30±0.48 & \textbf{3.7} \\
2000 & linear & ER-20 & GAE & 79.5±11.0 & 0.39±0.14 & 0.34±0.09 & 0.22±0.10 & 898.7 \\
2000 & linear & ER-20 & GOLEM & 41.0±0.0 & 0.71±0.00 & 0.18±0.00 & 0.18±0.00 & 161.6 \\
2000 & linear & ER-20 & Notears & 83.0±0.0 & 0.42±0.00 & 0.39±0.00 & 0.30±0.00 & 98.7 \\
2000 & linear & ER-20 & Notears Non-linear & 110.6±11.1 & 0.12±0.10 & 0.66±0.25 & 0.41±0.36 & 5204.3 \\
2000 & linear & ER-20 & PC & 104.0±0.0 & 0.09±0.00 & 0.80±0.00 & 0.39±0.00 & 1.7 \\
2000 & linear & ER-20 & DAGMA Linear & \textbf{24.0±0.0} & 0.86±0.00 & 0.13±0.00 & 0.14±0.00 & 4.7 \\
2000 & linear & ER-20 & DAGMA Non-linear & 100.0±0.0 & 0.00±0.00 & \textbf{0.00±0.00} & \textbf{0.00±0.00} & 39.7 \\
2000 & linear & ER-20 & GES & 120.0±0.0 & 0.57±0.00 & 0.66±0.00 & 1.21±0.00 & 187.8 \\
\addlinespace
2000 & linear & ER-100 & DAG-GNN & 2012.4±1391.4 & 0.26±0.33 & 0.81±0.07 & 0.45±0.70 & 5350.8 \\
2000 & linear & ER-100 & DDCD Linear & 1023.5±2.3 & 0.02±0.00 & 0.75±0.01 & 0.02±0.00 & 19.0 \\
2000 & linear & ER-100 & DDCD Non-linear & 1058.0±46.4 & 0.03±0.02 & 0.71±0.25 & 0.03±0.02 & \textbf{5.0} \\
2000 & linear & ER-100 & DDCD Smooth & 2603.0±281.4 & 0.30±0.06 & 0.90±0.01 & 0.68±0.14 & \textbf{5.0} \\
2000 & linear & ER-100 & GAE & 1189.7±131.1 & 0.07±0.04 & 0.79±0.03 & 0.07±0.04 & 5537.6 \\
2000 & linear & ER-100 & GOLEM & 1109.0±0.0 & 0.03±0.00 & 0.87±0.00 & 0.04±0.00 & 707.5 \\
2000 & linear & ER-100 & Notears & 983.0±0.0 & 0.01±0.00 & 0.57±0.00 & \textbf{0.00±0.00} & 72.6 \\
2000 & linear & ER-100 & Notears Non-linear & 1027.2±37.2 & 0.00±0.00 & 0.92±0.06 & 0.02±0.01 & 2356.0 \\
2000 & linear & ER-100 & PC & 1094.0±0.0 & 0.06±0.00 & 0.80±0.00 & 0.06±0.00 & 112.6 \\
2000 & linear & ER-100 & DAGMA Linear & 1742.6±0.5 & \textbf{0.34±0.00} & 0.77±0.00 & 0.28±0.00 & 163.9 \\
2000 & linear & ER-100 & DAGMA Non-linear & \textbf{980.0±0.0} & 0.00±0.00 & \textbf{0.00±0.00} & \textbf{0.00±0.00} & 191.7 \\
2000 & linear & ER-100 & GES & - & - & - & - & >12hr \\
\addlinespace
2000 & Non-linear 1 & SF-20 & DAG-GNN & 42.3±4.2 & 0.25±0.07 & 0.19±0.12 & 0.02±0.02 & 507.1 \\
2000 & Non-linear 1 & SF-20 & DDCD Linear & 22.5±1.1 & 0.59±0.02 & 0.06±0.00 & 0.01±0.00 & 14.0 \\
2000 & Non-linear 1 & SF-20 & DDCD Non-linear & \textbf{8.4±1.0} & \textbf{0.84±0.02} & \textbf{0.00±0.00} & \textbf{0.00±0.00} & 4.0 \\
2000 & Non-linear 1 & SF-20 & DDCD Smooth & 42.2±12.9 & 0.33±0.23 & 0.60±0.25 & 0.18±0.09 & 4.0 \\
2000 & Non-linear 1 & SF-20 & GAE & 90.4±44.9 & 0.78±0.35 & 0.68±0.23 & 0.76±0.48 & 987.5 \\
2000 & Non-linear 1 & SF-20 & GOLEM & 36.0±0.0 & 0.46±0.00 & 0.27±0.00 & 0.07±0.00 & 169.5 \\
2000 & Non-linear 1 & SF-20 & Notears & 21.0±0.0 & 0.62±0.00 & 0.06±0.00 & 0.01±0.00 & 2.0 \\
2000 & Non-linear 1 & SF-20 & Notears Non-linear & 12.1±0.3 & 0.79±0.01 & 0.03±0.01 & 0.01±0.00 & 93.9 \\
2000 & Non-linear 1 & SF-20 & PC & 27.0±0.0 & 0.58±0.00 & 0.25±0.00 & 0.07±0.00 & 2.1 \\
2000 & Non-linear 1 & SF-20 & DAGMA Linear & 23.0±0.0 & 0.58±0.00 & 0.06±0.00 & 0.01±0.00 & \textbf{0.0} \\
2000 & Non-linear 1 & SF-20 & DAGMA Non-linear & 23.2±2.1 & 0.55±0.04 & 0.02±0.02 & 0.01±0.00 & 360.5 \\
2000 & Non-linear 1 & SF-20 & GES & 48.0±0.0 & 0.54±0.00 & 0.59±0.00 & 0.29±0.00 & 11.1 \\
\addlinespace
2000 & Non-linear 1 & SF-100 & DAG-GNN & 206.8±33.3 & 0.38±0.11 & 0.08±0.03 & \textbf{0.00±0.00} & 1784.4 \\
2000 & Non-linear 1 & SF-100 & DDCD Linear & 157.8±3.1 & 0.54±0.01 & 0.08±0.00 & \textbf{0.00±0.00} & 18.5 \\
2000 & Non-linear 1 & SF-100 & DDCD Non-linear & 153.0±12.1 & 0.56±0.04 & 0.09±0.02 & \textbf{0.00±0.00} & \textbf{5.0} \\
2000 & Non-linear 1 & SF-100 & DDCD Smooth & 281.1±55.1 & 0.38±0.27 & 0.58±0.21 & 0.03±0.01 & \textbf{5.0} \\
2000 & Non-linear 1 & SF-100 & GAE & 275.8±194.1 & 0.44±0.33 & 0.31±0.28 & 0.02±0.03 & 3872.4 \\
2000 & Non-linear 1 & SF-100 & GOLEM & 203.0±0.0 & 0.40±0.00 & 0.09±0.00 & \textbf{0.00±0.00} & 680.6 \\
2000 & Non-linear 1 & SF-100 & Notears & 154.0±0.0 & 0.55±0.00 & 0.07±0.00 & \textbf{0.00±0.00} & 42.3 \\
2000 & Non-linear 1 & SF-100 & Notears Non-linear & \textbf{79.5±5.1} & \textbf{0.83±0.01} & 0.10±0.01 & 0.01±0.00 & 11903.1 \\
2000 & Non-linear 1 & SF-100 & DAGMA Linear & 162.0±0.0 & 0.52±0.00 & 0.06±0.00 & \textbf{0.00±0.00} & 11.0 \\
2000 & Non-linear 1 & SF-100 & DAGMA Non-linear & 176.7±3.4 & 0.45±0.01 & \textbf{0.02±0.00} & \textbf{0.00±0.00} & 1263.5 \\
2000 & Non-linear 1 & SF-100 & GES & 316.0±0.0 & 0.50±0.00 & 0.58±0.00 & 0.05±0.00 & 21959.0 \\
\addlinespace
2000 & Non-linear 1 & ER-20 & DAG-GNN & 86.7±4.3 & 0.12±0.05 & 0.22±0.11 & 0.04±0.02 & 885.5 \\
2000 & Non-linear 1 & ER-20 & DDCD Linear & 81.9±1.2 & 0.18±0.01 & 0.23±0.05 & 0.05±0.02 & 14.0 \\
2000 & Non-linear 1 & ER-20 & DDCD Non-linear & \textbf{23.1±4.3} & 0.76±0.05 & \textbf{0.00±0.01} & \textbf{0.00±0.01} & \textbf{4.0} \\
2000 & Non-linear 1 & ER-20 & DDCD Smooth & 111.6±6.7 & 0.30±0.06 & 0.71±0.05 & 0.75±0.15 & \textbf{4.0} \\
2000 & Non-linear 1 & ER-20 & GAE & 34.5±32.2 & \textbf{0.86±0.27} & 0.24±0.21 & 0.25±0.15 & 1019.8 \\
2000 & Non-linear 1 & ER-20 & GOLEM & 93.0±0.0 & 0.12±0.00 & 0.48±0.00 & 0.12±0.00 & 292.4 \\
2000 & Non-linear 1 & ER-20 & Notears & 79.0±0.0 & 0.25±0.00 & 0.31±0.00 & 0.12±0.00 & 3.0 \\
2000 & Non-linear 1 & ER-20 & Notears Non-linear & 74.7±2.8 & 0.45±0.03 & 0.41±0.03 & 0.31±0.02 & 415.7 \\
2000 & Non-linear 1 & ER-20 & PC & 105.0±0.0 & 0.19±0.00 & 0.71±0.00 & 0.48±0.00 & 6.1 \\
2000 & Non-linear 1 & ER-20 & DAGMA Linear & 79.0±0.0 & 0.20±0.00 & 0.17±0.00 & 0.04±0.00 & \textbf{0.0} \\
2000 & Non-linear 1 & ER-20 & DAGMA Non-linear & 76.7±1.5 & 0.28±0.01 & 0.29±0.02 & 0.12±0.01 & 1890.7 \\
2000 & Non-linear 1 & ER-20 & GES & 120.0±0.0 & 0.46±0.00 & 0.68±0.00 & 1.00±0.00 & 42.5 \\
\addlinespace
2000 & Non-linear 1 & ER-100 & DAG-GNN & 993.7±20.4 & 0.03±0.02 & 0.05±0.06 & \textbf{0.00±0.00} & 3035.9 \\
2000 & Non-linear 1 & ER-100 & DDCD Linear & 963.7±5.5 & 0.11±0.00 & 0.34±0.02 & 0.01±0.00 & 19.3 \\
2000 & Non-linear 1 & ER-100 & DDCD Non-linear & \textbf{177.6±28.7} & \textbf{0.83±0.03} & \textbf{0.00±0.00} & \textbf{0.00±0.00} & \textbf{5.0} \\
2000 & Non-linear 1 & ER-100 & DDCD Smooth & 1986.3±463.1 & 0.20±0.11 & 0.87±0.03 & 0.36±0.17 & \textbf{5.0} \\
2000 & Non-linear 1 & ER-100 & GAE & 676.8±473.4 & 0.68±0.30 & 0.23±0.28 & 0.09±0.12 & 8688.7 \\
2000 & Non-linear 1 & ER-100 & GOLEM & 1039.0±0.0 & 0.01±0.00 & 0.71±0.00 & 0.01±0.00 & 1090.3 \\
2000 & Non-linear 1 & ER-100 & Notears & 1000.0±0.0 & 0.16±0.00 & 0.48±0.00 & 0.04±0.00 & 711.7 \\
2000 & Non-linear 1 & ER-100 & Notears Non-linear & 1348.1±26.7 & 0.42±0.01 & 0.65±0.01 & 0.20±0.01 & 37706.2 \\
2000 & Non-linear 1 & ER-100 & PC & 1177.0±0.0 & 0.14±0.00 & 0.73±0.00 & 0.10±0.00 & 2864.1 \\
2000 & Non-linear 1 & ER-100 & DAGMA Linear & 978.0±0.0 & 0.10±0.00 & 0.37±0.00 & 0.01±0.00 & 13.0 \\
2000 & Non-linear 1 & ER-100 & DAGMA Non-linear & 962.3±3.2 & 0.07±0.00 & 0.27±0.02 & 0.01±0.00 & 1508.5 \\
2000 & Non-linear 1 & ER-100 & GES & - & - & - & - & >12hr \\
\addlinespace
\addlinespace
20 & linear & SF-20 & DAG-GNN & 63.2±8.3 & \textbf{0.84±0.03} & 0.56±0.04 & 0.41±0.06 & 15.4 \\
20 & linear & SF-20 & DDCD Linear & 68.3±1.6 & \textbf{0.84±0.01} & 0.58±0.01 & 0.44±0.01 & 14.0 \\
20 & linear & SF-20 & DDCD Non-linear & \textbf{31.8±5.4} & 0.70±0.15 & \textbf{0.31±0.05} & 0.12±0.04 & 4.1 \\
20 & linear & SF-20 & DDCD Smooth & 98.4±10.9 & 0.17±0.09 & 0.89±0.05 & 0.56±0.11 & 3.9 \\
20 & linear & SF-20 & GAE & 67.5±18.1 & 0.15±0.12 & 0.73±0.29 & 0.20±0.14 & 454.2 \\
20 & linear & SF-20 & GOLEM & 77.0±0.0 & 0.56±0.00 & 0.66±0.00 & 0.41±0.00 & 106.8 \\
20 & linear & SF-20 & Notears & 57.0±0.0 & 0.50±0.00 & 0.56±0.00 & 0.24±0.00 & 25.2 \\
20 & linear & SF-20 & Notears Non-linear & 108.6±3.7 & 0.49±0.03 & 0.78±0.01 & 0.65±0.02 & 168.4 \\
20 & linear & SF-20 & PC & 57.0±0.0 & 0.10±0.00 & 0.75±0.00 & \textbf{0.11±0.00} & \textbf{0.0} \\
20 & linear & SF-20 & DAGMA Linear & 80.0±0.0 & 0.58±0.00 & 0.66±0.00 & 0.43±0.00 & 1.0 \\
20 & linear & SF-20 & DAGMA Non-linear & 105.3±9.1 & 0.34±0.04 & 0.81±0.02 & 0.54±0.08 & 189.6 \\
20 & linear & SF-20 & GES & 66.2±5.4 & 0.32±0.08 & 0.70±0.05 & 0.28±0.00 & 5.3 \\
\addlinespace
20 & linear & SF-100 & DAG-GNN & 729.3±175.7 & \textbf{0.51±0.06} & 0.77±0.05 & 0.12±0.04 & 186.0 \\
20 & linear & SF-100 & DDCD Linear & 519.6±74.2 & 0.47±0.17 & 0.63±0.22 & 0.08±0.03 & 19.0 \\
20 & linear & SF-100 & DDCD Non-linear & \textbf{290.0±30.3} & 0.14±0.16 & \textbf{0.13±0.15} & \textbf{0.00±0.00} & 5.1 \\
20 & linear & SF-100 & DDCD Smooth & 876.0±208.1 & 0.12±0.06 & 0.94±0.02 & 0.14±0.05 & 5.0 \\
20 & linear & SF-100 & GAE & 294.1±27.8 & 0.24±0.10 & 0.35±0.14 & 0.01±0.01 & 1018.2 \\
20 & linear & SF-100 & GOLEM & 436.0±0.0 & 0.20±0.00 & 0.74±0.00 & 0.04±0.00 & 580.8 \\
20 & linear & SF-100 & Notears & 450.0±0.0 & 0.27±0.00 & 0.72±0.00 & 0.05±0.00 & 2718.5 \\
20 & linear & SF-100 & Notears Non-linear & 606.0±25.7 & 0.13±0.02 & 0.90±0.02 & 0.08±0.01 & 17922.8 \\
20 & linear & SF-100 & PC & 362.0±0.0 & 0.06±0.00 & 0.80±0.00 & 0.02±0.00 & \textbf{3.0} \\
20 & linear & SF-100 & DAGMA Linear & 579.1±8.6 & 0.25±0.01 & 0.81±0.01 & 0.08±0.00 & 72.7 \\
20 & linear & SF-100 & DAGMA Non-linear & 540.3±99.6 & 0.13±0.01 & 0.84±0.10 & 0.06±0.02 & 461.1 \\
20 & linear & SF-100 & GES & - & - & - & - & >12hr \\
\addlinespace
20 & linear & ER-20 & DAG-GNN & \textbf{64.1±5.2} & 0.85±0.05 & \textbf{0.37±0.02} & 0.56±0.04 & 31.1 \\
20 & linear & ER-20 & DDCD Linear & 65.0±1.5 & \textbf{0.90±0.01} & 0.39±0.01 & 0.63±0.02 & 14.0 \\
20 & linear & ER-20 & DDCD Non-linear & 92.8±4.0 & 0.15±0.06 & \textbf{0.33±0.07} & 0.09±0.05 & 4.0 \\
20 & linear & ER-20 & DDCD Smooth & 134.3±11.9 & 0.25±0.16 & 0.81±0.08 & 1.07±0.34 & 4.0 \\
20 & linear & ER-20 & GAE & 90.3±5.5 & 0.29±0.16 & 0.40±0.04 & 0.22±0.12 & 353.7 \\
20 & linear & ER-20 & GOLEM & 81.0±0.0 & 0.65±0.00 & 0.42±0.00 & 0.52±0.00 & 112.7 \\
20 & linear & ER-20 & Notears & 85.7±0.9 & 0.37±0.01 & 0.41±0.01 & 0.29±0.00 & 46.3 \\
20 & linear & ER-20 & Notears Non-linear & 106.5±6.3 & 0.49±0.10 & 0.58±0.05 & 0.75±0.12 & 697.1 \\
20 & linear & ER-20 & PC & 101.0±0.0 & 0.09±0.00 & 0.65±0.00 & 0.19±0.00 & \textbf{0.0} \\
20 & linear & ER-20 & DAGMA Linear & 71.8±0.6 & 0.69±0.00 & 0.40±0.00 & 0.51±0.01 & 5.8 \\
20 & linear & ER-20 & DAGMA Non-linear & 100.0±0.0 & 0.00±0.00 & \textbf{0.00±0.00} & \textbf{0.00±0.00} & 10.7 \\
20 & linear & ER-20 & GES & 110.0±9.7 & 0.20±0.08 & 0.70±0.11 & 0.51±0.00 & 14.6 \\
\addlinespace
20 & linear & ER-100 & DAG-GNN & 1245.9±247.7 & 0.09±0.07 & 0.79±0.03 & 0.09±0.08 & 183.6 \\
20 & linear & ER-100 & DDCD Linear & 1014.0±1.2 & 0.02±0.00 & 0.74±0.00 & \textbf{0.01±0.00} & 19.8 \\
20 & linear & ER-100 & DDCD Non-linear & 1056.7±47.7 & 0.03±0.02 & \textbf{0.61±0.32} & 0.03±0.02 & \textbf{5.0} \\
20 & linear & ER-100 & DDCD Smooth & 2266.4±325.5 & 0.21±0.06 & 0.91±0.01 & 0.49±0.13 & \textbf{5.0} \\
20 & linear & ER-100 & GAE & 1121.8±84.3 & 0.05±0.03 & 0.72±0.26 & 0.05±0.03 & 2100.3 \\
20 & linear & ER-100 & GOLEM & 1117.0±0.0 & 0.02±0.00 & 0.89±0.00 & 0.04±0.00 & 590.7 \\
20 & linear & ER-100 & Notears & \textbf{994.0±0.0} & 0.01±0.00 & 0.68±0.00 & \textbf{0.01±0.00} & 45.2 \\
20 & linear & ER-100 & Notears Non-linear & 1001.8±10.6 & 0.00±0.00 & 0.97±0.07 & \textbf{0.01±0.00} & 224.1 \\
20 & linear & ER-100 & PC & 1013.0±0.0 & 0.03±0.00 & 0.72±0.00 & 0.02±0.00 & 6.9 \\
20 & linear & ER-100 & DAGMA Linear & 1767.3±0.9 & \textbf{0.33±0.00} & 0.77±0.00 & 0.28±0.00 & 90.7 \\
20 & linear & ER-100 & DAGMA Non-linear & 980.0±0.0 & 0.00±0.00 & \textbf{0.00±0.00} & \textbf{0.00±0.00} & 40.7 \\
20 & linear & ER-100 & GES & - & - & - & - & >12hr \\
\addlinespace
20 & Non-linear 1 & SF-20 & DAG-GNN & 76.4±14.1 & 0.48±0.09 & 0.65±0.07 & 0.36±0.12 & 14.8 \\
20 & Non-linear 1 & SF-20 & DDCD Linear & 74.1±3.9 & \textbf{0.69±0.02} & 0.62±0.02 & 0.43±0.03 & 14.0 \\
20 & Non-linear 1 & SF-20 & DDCD Non-linear & 53.4±5.8 & 0.33±0.05 & \textbf{0.51±0.08} & 0.14±0.05 & 4.0 \\
20 & Non-linear 1 & SF-20 & DDCD Smooth & 84.3±10.7 & 0.26±0.14 & 0.82±0.10 & 0.44±0.09 & 4.0 \\
20 & Non-linear 1 & SF-20 & GAE & 88.9±35.0 & 0.26±0.21 & 0.82±0.15 & 0.46±0.46 & 406.0 \\
20 & Non-linear 1 & SF-20 & GOLEM & 85.0±0.0 & 0.60±0.00 & 0.68±0.00 & 0.47±0.00 & 109.2 \\
20 & Non-linear 1 & SF-20 & Notears & 53.0±0.0 & 0.46±0.00 & 0.56±0.00 & 0.22±0.00 & 9.2 \\
20 & Non-linear 1 & SF-20 & Notears Non-linear & 102.4±4.1 & 0.64±0.05 & 0.74±0.02 & 0.68±0.03 & 26.3 \\
20 & Non-linear 1 & SF-20 & PC & \textbf{45.0±0.0} & 0.19±0.00 & \textbf{0.29±0.00} & \textbf{0.03±0.00} & \textbf{0.0} \\
20 & Non-linear 1 & SF-20 & DAGMA Linear & 72.0±0.0 & 0.56±0.00 & 0.65±0.00 & 0.38±0.00 & 1.0 \\
20 & Non-linear 1 & SF-20 & DAGMA Non-linear & 111.4±4.4 & 0.53±0.06 & 0.78±0.02 & 0.72±0.02 & 239.1 \\
20 & Non-linear 1 & SF-20 & GES & 66.2±8.7 & 0.19±0.04 & 0.75±0.06 & 0.22±0.00 & 5.9 \\
\addlinespace
20 & Non-linear 1 & SF-100 & DAG-GNN & 798.6±149.3 & \textbf{0.36±0.08} & 0.84±0.03 & 0.13±0.04 & 185.6 \\
20 & Non-linear 1 & SF-100 & DDCD Linear & 686.4±10.8 & 0.06±0.00 & 0.96±0.00 & 0.08±0.00 & 19.0 \\
20 & Non-linear 1 & SF-100 & DDCD Non-linear & 502.6±96.6 & 0.07±0.05 & 0.90±0.08 & 0.04±0.02 & 5.0 \\
20 & Non-linear 1 & SF-100 & DDCD Smooth & 1254.0±135.8 & 0.19±0.14 & 0.94±0.04 & 0.22±0.02 & 5.0 \\
20 & Non-linear 1 & SF-100 & GAE & 421.1±154.6 & 0.20±0.08 & \textbf{0.66±0.17} & 0.04±0.04 & 1123.3 \\
20 & Non-linear 1 & SF-100 & GOLEM & 400.0±0.0 & 0.01±0.00 & 0.96±0.00 & \textbf{0.02±0.00} & 553.0 \\
20 & Non-linear 1 & SF-100 & Notears & 542.0±0.0 & 0.13±0.00 & 0.87±0.00 & 0.06±0.00 & 570.4 \\
20 & Non-linear 1 & SF-100 & Notears Non-linear & 764.6±14.1 & 0.13±0.01 & 0.93±0.00 & 0.11±0.00 & 23053.8 \\
20 & Non-linear 1 & SF-100 & PC & \textbf{382.0±0.0} & 0.04±0.00 & 0.88±0.00 & \textbf{0.02±0.00} & \textbf{1.0} \\
20 & Non-linear 1 & SF-100 & DAGMA Linear & 633.2±2.3 & 0.13±0.01 & 0.90±0.00 & 0.08±0.00 & 42.6 \\
20 & Non-linear 1 & SF-100 & DAGMA Non-linear & 385.0±7.7 & 0.07±0.01 & 0.81±0.02 & \textbf{0.02±0.00} & 503.2 \\
20 & Non-linear 1 & SF-100 & GES & - & - & - & - & >12hr \\
\addlinespace
20 & Non-linear 1 & ER-20 & DAG-GNN & 99.6±4.2 & 0.39±0.06 & 0.57±0.03 & 0.54±0.09 & 34.6 \\
20 & Non-linear 1 & ER-20 & DDCD Linear & 96.8±3.8 & \textbf{0.53±0.03} & 0.55±0.02 & 0.67±0.03 & 13.8 \\
20 & Non-linear 1 & ER-20 & DDCD Non-linear & \textbf{86.0±8.7} & 0.24±0.08 & \textbf{0.42±0.10} & 0.19±0.08 & 4.0 \\
20 & Non-linear 1 & ER-20 & DDCD Smooth & 111.1±6.1 & 0.27±0.08 & 0.72±0.06 & 0.71±0.16 & 4.0 \\
20 & Non-linear 1 & ER-20 & GAE & 105.0±6.6 & 0.23±0.12 & 0.67±0.06 & 0.45±0.18 & 713.1 \\
20 & Non-linear 1 & ER-20 & GOLEM & 112.0±0.0 & 0.36±0.00 & 0.63±0.00 & 0.64±0.00 & 272.4 \\
20 & Non-linear 1 & ER-20 & Notears & 93.0±0.0 & 0.35±0.00 & 0.56±0.00 & 0.46±0.00 & 3.6 \\
20 & Non-linear 1 & ER-20 & Notears Non-linear & 110.4±8.5 & 0.48±0.06 & 0.65±0.05 & 0.89±0.07 & 94.9 \\
20 & Non-linear 1 & ER-20 & PC & 98.0±0.0 & 0.05±0.00 & 0.71±0.00 & \textbf{0.13±0.00} & \textbf{0.0} \\
20 & Non-linear 1 & ER-20 & DAGMA Linear & 105.0±0.0 & 0.36±0.00 & 0.61±0.00 & 0.59±0.00 & 1.0 \\
20 & Non-linear 1 & ER-20 & DAGMA Non-linear & 111.9±10.8 & 0.43±0.09 & 0.67±0.06 & 0.88±0.07 & 183.2 \\
20 & Non-linear 1 & ER-20 & GES & 106.8±4.1 & 0.12±0.04 & 0.74±0.05 & 0.36±0.00 & 6.9 \\
\addlinespace
20 & Non-linear 1 & ER-100 & DAG-GNN & 1451.2±160.1 & \textbf{0.15±0.05} & 0.80±0.01 & 0.15±0.05 & 350.3 \\
20 & Non-linear 1 & ER-100 & DDCD Linear & 1273.9±8.3 & 0.11±0.00 & 0.77±0.01 & 0.09±0.00 & 19.2 \\
20 & Non-linear 1 & ER-100 & DDCD Non-linear & 1151.2±38.2 & 0.06±0.01 & \textbf{0.76±0.03} & 0.05±0.01 & 5.2 \\
20 & Non-linear 1 & ER-100 & DDCD Smooth & 2021.6±141.4 & 0.12±0.04 & 0.92±0.03 & 0.34±0.04 & 5.0 \\
20 & Non-linear 1 & ER-100 & GAE & 1231.2±67.6 & 0.07±0.03 & 0.81±0.05 & 0.08±0.02 & 1934.0 \\
20 & Non-linear 1 & ER-100 & GOLEM & 1273.0±0.0 & 0.07±0.00 & 0.82±0.00 & 0.08±0.00 & 750.7 \\
20 & Non-linear 1 & ER-100 & Notears & 1336.6±5.8 & 0.10±0.00 & 0.82±0.00 & 0.11±0.00 & 1425.9 \\
20 & Non-linear 1 & ER-100 & Notears Non-linear & 1352.8±44.8 & 0.09±0.01 & 0.83±0.02 & 0.12±0.01 & 10844.1 \\
20 & Non-linear 1 & ER-100 & PC & \textbf{1072.0±0.0} & 0.01±0.00 & 0.86±0.00 & \textbf{0.02±0.00} & \textbf{1.0} \\
20 & Non-linear 1 & ER-100 & DAGMA Linear & 1363.7±2.4 & 0.09±0.00 & 0.84±0.00 & 0.11±0.00 & 55.7 \\
20 & Non-linear 1 & ER-100 & DAGMA Non-linear & 1103.2±53.7 & 0.03±0.01 & 0.78±0.04 & 0.03±0.02 & 459.5 \\
20 & Non-linear 1 & ER-100 & GES & - & - & - & - & >12hr \\
\hline
\end{longtable}

\fontsize{11}{11}\selectfont

\subsection{Non-linear Diversity Experiments}

To address concerns about testing DDCD in a limited range of nonlinear functions, we conducted a comprehensive evaluation across seven distinct non-linear functions as shown in Figure \ref{fig:nonlinear_shd}. These functions consist of smooth ($\sin, \cos, \tanh$), non-smooth (ReLU), bounded (Sigmoid, $\tanh$), and unbounded (polynomial). Similar to the main experiment, we generated both SF and ER graphs at different sizes. Here we compared the performance among the Non-linear and Linear models from DDCD, DAGMA, and GES.

Based on our experiment, DDCD Non-Linear consistently achieves the lowest error rates (SHD) across most non-linear functions. Specifically, in the periodic (Types 1 \& 2) and activation-function-based relationships (Types 5, 6, \& 7), DDCD Non-Linear significantly outperforming linear baselines and often surpassing the state-of-the-art DAGMA Non-Linear (orange). However,  quadratic and sigmoid relationships (Types 3 \& 4) prove difficult for DDCD Non-linear but they seem to be challenging for all the other methods as well.

In terms of runtime, DDCD Non-Linear (blue) present sharp advantage compared with other methods, especially on large graphs. While DAGMA Non-Linear and GES exhibit exponential scaling and consume thousands of seconds on graph with 100 nodes, DDCD Non-Linear can finish within 20 seconds (on CPU). Note that we don't have data points from some GES runs because they run out of time (3 hrs). 

\begin{figure}[ht]
    \centering
    \includegraphics[width=0.8\textwidth]{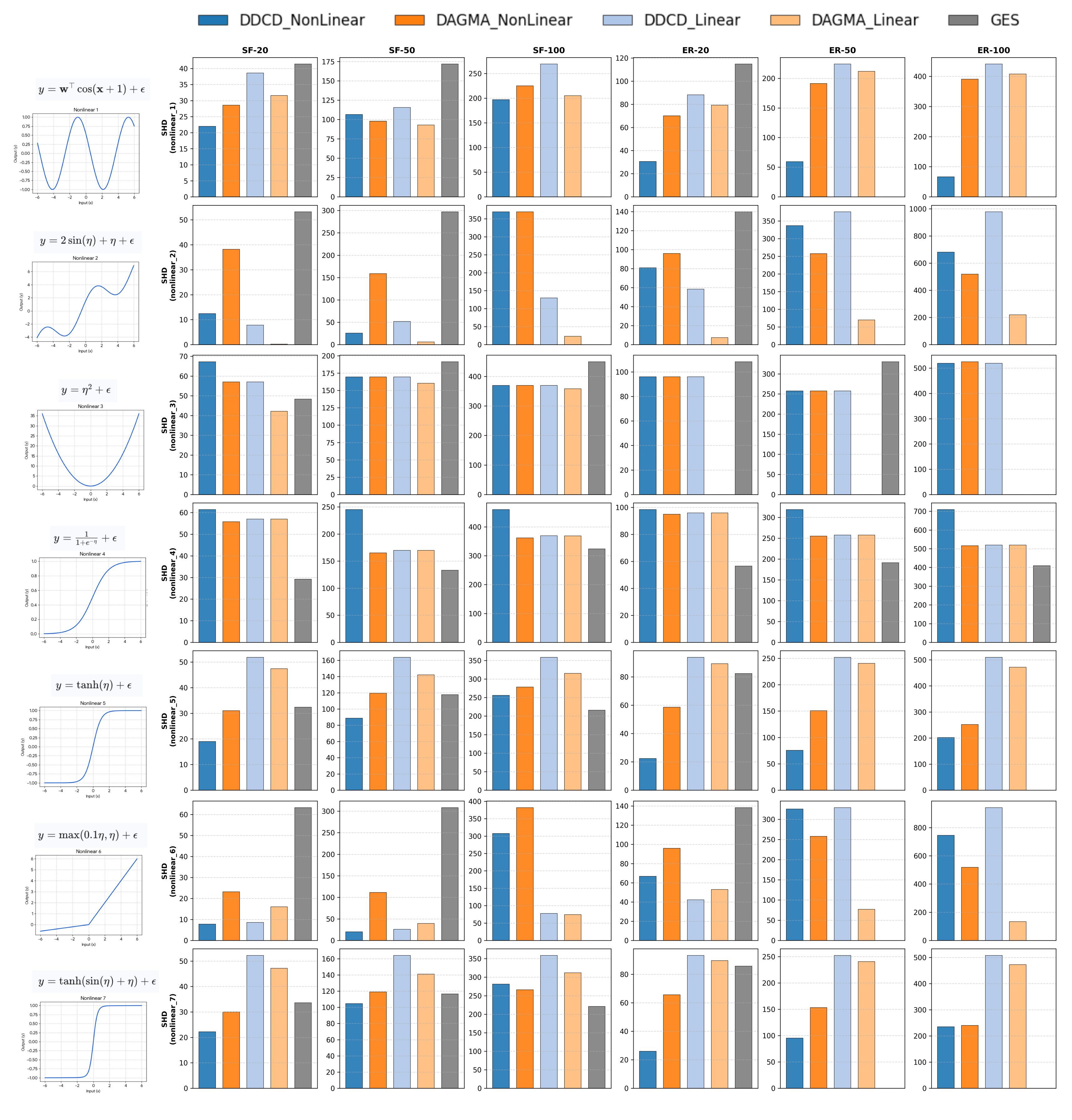}
    \caption{SHD Results from the Non-linear Diversity Experiments}
    \label{fig:nonlinear_shd}
\end{figure}

\begin{figure}[ht]
    \centering
    \includegraphics[width=0.8\textwidth]{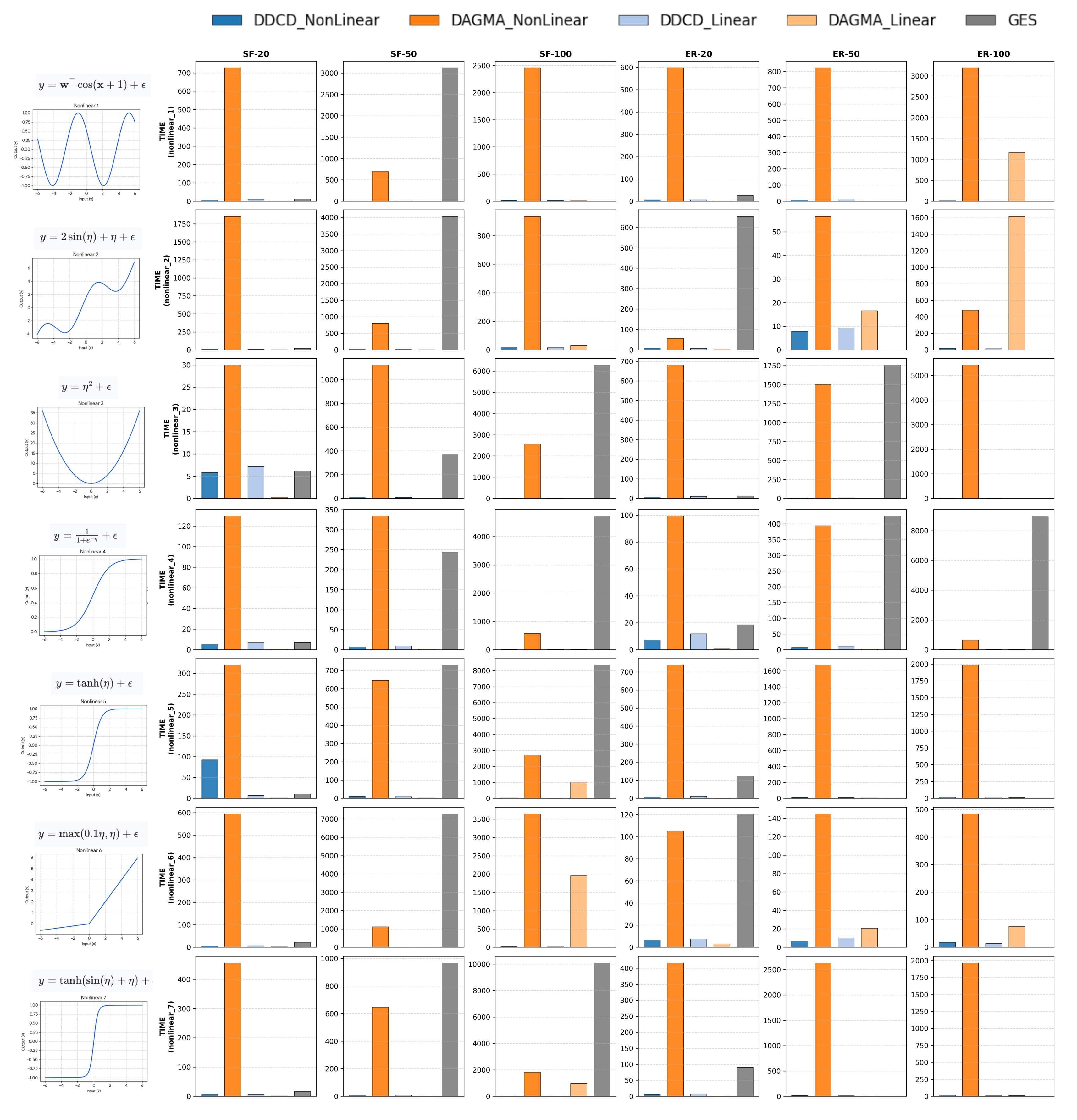}
    \caption{Time Costs from the Non-linear Diversity Experiments}
    \label{fig:nonlinear_time}
\end{figure}

\subsection{Case Study 2: Physical Function in Aging Population} \label{aging}

\begin{figure}[t]
\centering
\includegraphics[width=\linewidth]{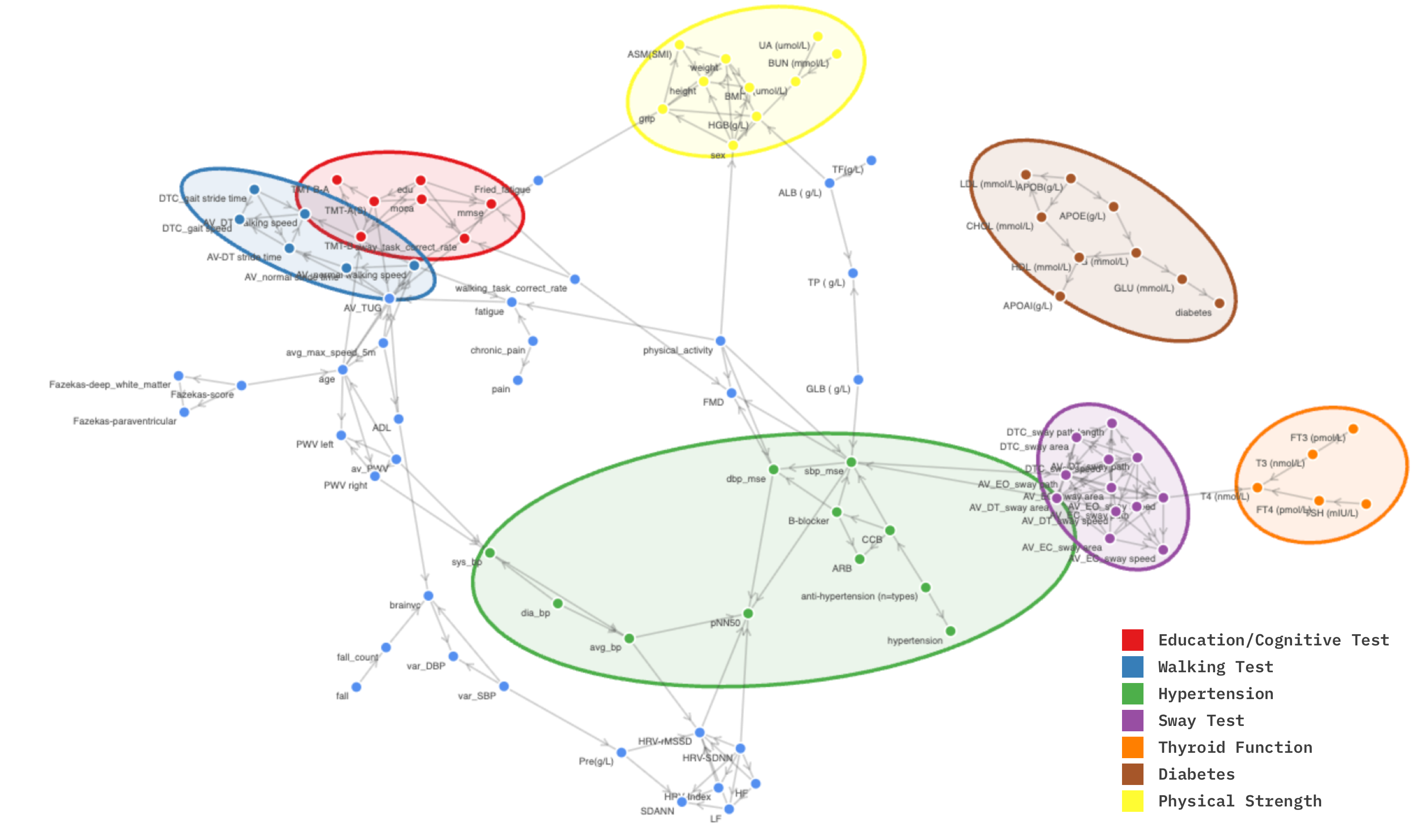}
\caption{Inferred Causal Network in an Aging Study. }
\label{fig:aging}
\end{figure}

In the aging dataset, the average age was 70.6 $\pm$ 7.5 years old. Within the cohort, 53.8\% of the participants were women; 60.6\% had hypertension, and 30.4\% had diabetes. The causal network was also inferred using DDCD smooth and a network was extracted with a threshold at 0.1. The network visualization was done using python package lightgraph \citep{lightgraph}.  

In Figure \ref{fig:aging}, we identified several correlated node clusters, which we circled and summarized in the legend.  These clusters suggest that inferred edges capture similar functions; many specific causal relationships in the clusters make sense.  For example, education (level) is inferred as a cause of performance scores on several cognitive tests, including the Trail Making Tests (TMT) \citep{bowie2006administration, tombaugh2004trail}, Mini-Mental State Examination (MMSE) \citep{tombaugh1992mini}, and the performance on a postural sway task \citep{zhou2017complexity}.  While the latter does not initially appear to be related to education level, postural sway is known to increase with increased cognitive load \citep{pellecchia2003postural}, suggesting an indirect causal relationship captured here. 
Current norms
for postural sway tests do not reflect education levels \citep{goble2018postural}, but our findings  suggest that
investigating this possibility might be valuable.

In the diabetes cluster, APOB and cholesterol (CHOL) appear as root causes. APOB secretion affects production of APOE \citep{shinozaki2025apob}, which binds to Triglycerides (TG), which contribute to Glucose Level (GLU), which %
is among the diagnostic criteria for diabetes. Between the clusters of hypertension and physical strength, we see nodes representing Albumin (ALB), Globulin (GLB) and Total Protein (TP). ALB and GLB are two main types of blood proteins, so the directions of these arrows are logical as well.  As in the MI data set above, the directions of a few inferred edges are reversed.  %
For example, physical activity was inferred as the cause of sex/gender.  %

\subsection{Inferred weights of the MI network}
\label{mi_edgeweights_ddcd}

Here we show a table containing the weights of an inferred network by DDCD Smooth for the MI experiment, sorted by edge weights, similar to that in Figure ~\ref{fig:mi}.  Most of the edges here seem explicable, reflecting standard medical understanding of correlations if not causality.  We do not claim that they represent novel findings, only that the existence of a directed ``causal'' edge in this graph is generally logical.  Therefore, investigating any edges whose explanations are {\em not} readily apparent in similar data sets may lead to new findings in the future.

\small

\begin{longtable}{l p{3.5cm} l p{3.5cm} l}
\caption{DDCD Smooth MI Edges}\\
\hline
\textbf{Source} & \textbf{Source Label} & \textbf{Target} & \textbf{Target Label} & \textbf{Weight} \\
\hline
\endfirsthead
\hline
\textbf{Source} & \textbf{Source Label} & \textbf{Target} & \textbf{Target Label} & \textbf{Weight} \\
\hline
\endhead
\hline
\endfoot
\endlastfoot
LID\_KB & Use of lidocaine by the Emergency Cardiology Team & NA\_KB & Use of opioid drugs by the Emergency Cardiology Team & 0.66\\
R\_AB\_2\_n & Relapse of the pain in the second day of the hospital period & NA\_R\_2\_n & Use of opioid drugs in the ICU in the second day of the hospital period & 0.58\\
R\_AB\_1\_n & Relapse of the pain in the first hours of the hospital period & NA\_R\_1\_n & Use of opioid drugs in the ICU in the first hours of the hospital period & 0.52\\
R\_AB\_3\_n & Relapse of the pain in the third day of the hospital period & NA\_R\_3\_n & Use of opioid drugs in the ICU in the third day of the hospital period & 0.50\\
NOT\_NA\_2\_n & Use of NSAIDs in the ICU in the second day of the hospital period & R\_AB\_2\_n & Relapse of the pain in the second day of the hospital period & 0.49\\
RAZRIV & Myocardial rupture & LET\_IS & Lethal outcome & 0.48\\
NA\_KB & Use of opioid drugs by the Emergency Cardiology Team & NOT\_NA\_KB & Use of NSAIDs by the Emergency Cardiology Team & 0.47\\
NOT\_NA\_2\_n & Use of NSAIDs in the ICU in the second day of the hospital period & NOT\_NA\_3\_n & Use of NSAIDs in the ICU in the third day of the hospital period & 0.47\\
NOT\_NA\_3\_n & Use of NSAIDs in the ICU in the third day of the hospital period & R\_AB\_3\_n & Relapse of the pain in the third day of the hospital period & 0.46\\
LID\_KB & Use of lidocaine by the Emergency Cardiology Team & NOT\_NA\_KB & Use of NSAIDs by the Emergency Cardiology Team & 0.44\\
K\_SH\_POST & Cardiogenic shock at the time of admission to intensive care unit & LET\_IS & Lethal outcome & 0.39\\
ASP\_S\_n & Use of acetylsalicylic acid in the ICU & GEPAR\_S\_n & Use of a anticoagulants (heparin) in the ICU & 0.38\\
FIBR\_JELUD & Ventricular fibrillation & LID\_S\_n & Use of lidocaine in the ICU & 0.32\\
n\_r\_ecg\_p\_03 & Premature ventricular contractions on ECG at the time of admission to hospital & LID\_S\_n & Use of lidocaine in the ICU & 0.32\\
ZSN & Chronic heart failure & ZSN\_A & Presence of chronic Heart failure (HF) in the anamnesis & 0.31\\
n\_r\_ecg\_p\_05 & Paroxysms of atrial fibrillation on ECG at the time of admission to hospital & FIBR\_PREDS & Atrial fibrillation & 0.29\\
n\_r\_ecg\_p\_04 & Frequent premature ventricular contractions on ECG at the time of admission to hospital & LID\_S\_n & Use of lidocaine in the ICU & 0.27\\
R\_AB\_1\_n & Relapse of the pain in the first hours of the hospital period & NOT\_NA\_1\_n & Use of NSAIDs in the ICU in the first hours of the hospital period & 0.26\\
R\_AB\_3\_n & Relapse of the pain in the third day of the hospital period & R\_AB\_2\_n & Relapse of the pain in the second day of the hospital period & 0.25\\
NOT\_NA\_2\_n & Use of NSAIDs in the ICU in the second day of the hospital period & R\_AB\_1\_n & Relapse of the pain in the first hours of the hospital period & 0.23\\
NITR\_S & Use of liquid nitrates in the ICU & OTEK\_LANC & Pulmonary edema & 0.22\\
NOT\_NA\_2\_n & Use of NSAIDs in the ICU in the second day of the hospital period & NOT\_NA\_1\_n & Use of NSAIDs in the ICU in the first hours of the hospital period & 0.21\\
LID\_S\_n & Use of lidocaine in the ICU & NA\_KB & Use of opioid drugs by the Emergency Cardiology Team & 0.19\\
R\_AB\_3\_n & Relapse of the pain in the third day of the hospital period & R\_AB\_1\_n & Relapse of the pain in the first hours of the hospital period & 0.19\\
NOT\_NA\_3\_n & Use of NSAIDs in the ICU in the third day of the hospital period & NOT\_NA\_1\_n & Use of NSAIDs in the ICU in the first hours of the hospital period & 0.19\\
zab\_leg\_01 & Chronic bronchitis in the anamnesis & ZSN & Chronic heart failure & 0.19\\
NA\_R\_2\_n & Use of opioid drugs in the ICU in the second day of the hospital period & R\_AB\_1\_n & Relapse of the pain in the first hours of the hospital period & 0.19\\
REC\_IM & Relapse of the myocardial infarction & R\_AB\_3\_n & Relapse of the pain in the third day of the hospital period & 0.18\\
n\_p\_ecg\_p\_12 & Complete RBBB on ECG at the time of admission to hospital & LET\_IS & Lethal outcome & 0.18\\
FIBR\_JELUD & Ventricular fibrillation & GIPO\_K & Hypokalemia ( < 4 mmol/L) & 0.18\\
NOT\_NA\_3\_n & Use of NSAIDs in the ICU in the third day of the hospital period & NA\_R\_3\_n & Use of opioid drugs in the ICU in the third day of the hospital period & 0.18\\
LET\_IS & Lethal outcome & ASP\_S\_n & Use of acetylsalicylic acid in the ICU & -0.20\\
TRENT\_S\_n & Use of Trental in the ICU & ASP\_S\_n & Use of acetylsalicylic acid in the ICU & -0.29\\

\end{longtable}

\begin{figure}[htb!]
\includegraphics[width=\textwidth]{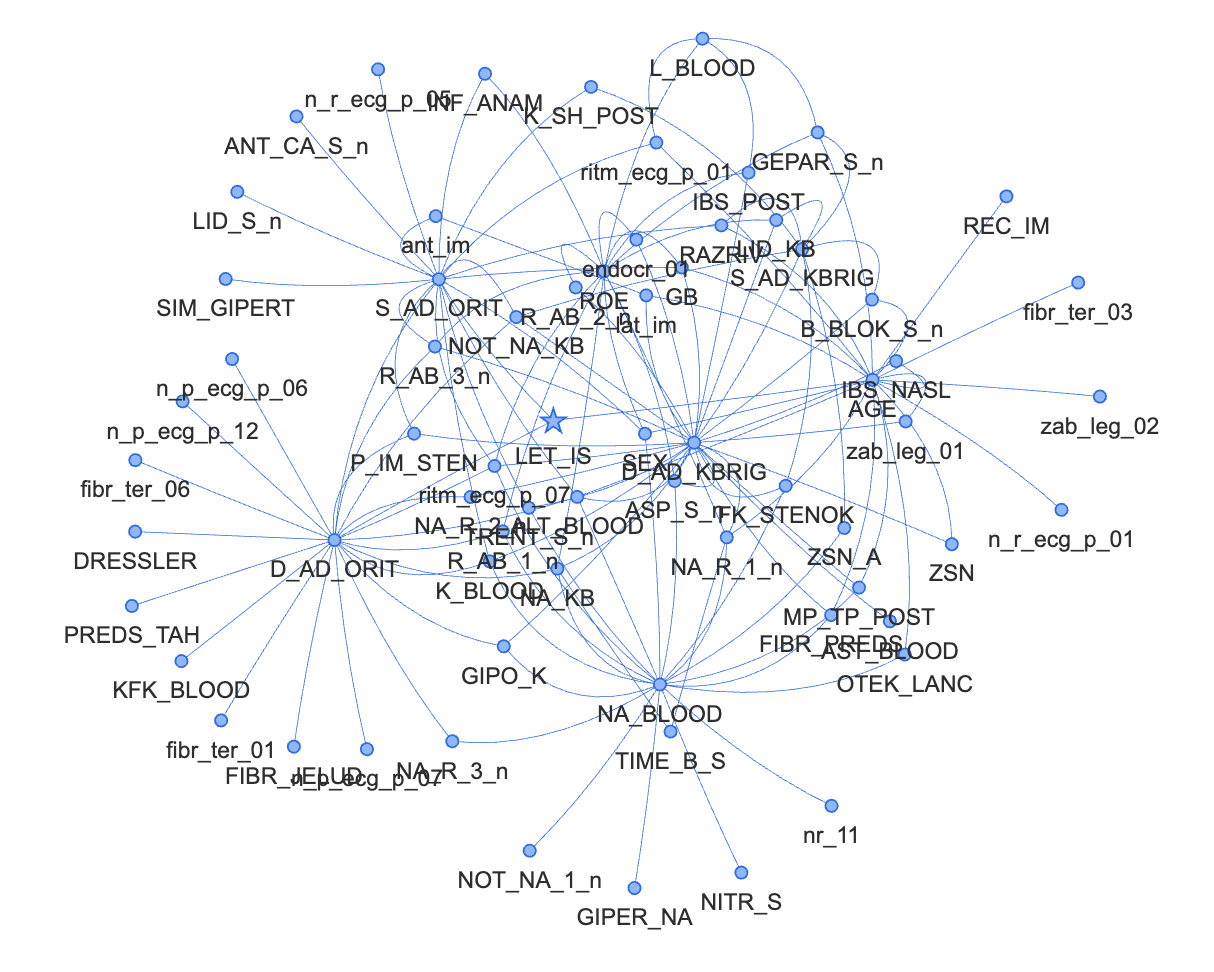}
\caption{Inferred Causal Network around Lethal Outcome in Myocardial Infarction by NOTEARS. }
\label{fig:notears_mi}
\end{figure}

\newpage

\subsection{Inferred MI network with NOTEARS} 
\label{mi_edgeweights_notears}

Next, we show an example of the inferred network by NOTEARS, using recommended parameters and focused on the same 2-hop neighborhood around the "lethal outcome" node, on the exact same MI dataset.  While some edges make sense, most of the edges seem to be clustered around a few hub targets and overall seem less ``causal."  Patient age is one of the main target nodes, supposedly caused by other clinical variables.  
Compare the DDCD results in Figure ~\ref{fig:mi} to the NOTEARS results on the same data shown in Figure ~\ref{fig:notears_mi} and whose edges are detailed in the table below.

This excessively hub-and-spoke structure, with fewer separate clusters of connected nodes, in our experience reflects higher numbers of spurious connections.   The intuition behind this claim is that multiple causes of one outcome (and thus edges connected to hubs) will also, if real, be detected as causally related to one another.  While we have removed short cycles from these models, there are enough related causal terms to form directed causal clusters, as highlighted in the ovals in Figure ~\ref{fig:mi}, rather than forming hubs with many unrelated spoke nodes, as we see with NOTEARS.

\begin{longtable}{l p{3.5cm} l p{3.5cm} l}
\caption{NOTEARS MI Edges}\\
\hline
\textbf{Source} & \textbf{Source Label} & \textbf{Target} & \textbf{Target Label} & \textbf{Weight} \\
\hline
\endfirsthead
\hline
\textbf{Source} & \textbf{Source Label} & \textbf{Target} & \textbf{Target Label} & \textbf{Weight} \\
\hline
\endhead
\hline
\endfoot
\endlastfoot
LET\_IS & Lethal outcome & AGE & Age of patient. & 1 \\
LET\_IS & Lethal outcome & S\_AD\_ORIT & Systolic blood pressure according to intensive care unit & 1 \\
LET\_IS & Lethal outcome & D\_AD\_ORIT & Diastolic blood pressure according to intensive care unit & 1 \\
FK\_STENOK & Functional class (FC) of angina pectoris in the last year. & NA\_BLOOD & Serum sodium content & 1 \\
MP\_TP\_POST & Paroxysms of atrial fibrillation at the time of admission to intensive care unit, (or at a pre-hospital stage) & NA\_BLOOD & Serum sodium content & 1 \\
nr\_11 & Observing of arrhythmia in the anamnesis & NA\_BLOOD & Serum sodium content & 1 \\
ZSN\_A & Presence of chronic Heart failure (HF) in the anamnesis & NA\_BLOOD & Serum sodium content & 1 \\
OTEK\_LANC & Pulmonary edema & NA\_BLOOD & Serum sodium content & 1 \\
K\_BLOOD & Serum potassium content & NA\_BLOOD & Serum sodium content & 1 \\
FIBR\_PREDS & Atrial fibrillation & NA\_BLOOD & Serum sodium content & 1 \\
TRENT\_S\_n & Use of Trental in the ICU & NA\_BLOOD & Serum sodium content & 1 \\
ASP\_S\_n & Use of acetylsalicylic acid in the ICU & NA\_BLOOD & Serum sodium content & 1 \\
NOT\_NA\_1\_n & Use of NSAIDs in the ICU in the first hours of the hospital period & NA\_BLOOD & Serum sodium content & 1 \\
GIPO\_K & Hypokalemia ( < 4 mmol/L) & NA\_BLOOD & Serum sodium content & 1 \\
NA\_R\_3\_n & Use of opioid drugs in the ICU in the third day of the hospital period & NA\_BLOOD & Serum sodium content & 1 \\
NA\_R\_1\_n & Use of opioid drugs in the ICU in the first hours of the hospital period & NA\_BLOOD & Serum sodium content & 1 \\
NITR\_S & Use of liquid nitrates in the ICU & NA\_BLOOD & Serum sodium content & 1 \\
NA\_KB & Use of opioid drugs by the Emergency Cardiology Team & NA\_BLOOD & Serum sodium content & 1 \\
ritm\_ecg\_p\_07 & ECG rhythm at the time of admission to hospital: sinus with a heart rate above 90 (tachycardia) & NA\_BLOOD & Serum sodium content & 1 \\
R\_AB\_1\_n & Relapse of the pain in the first hours of the hospital period & NA\_BLOOD & Serum sodium content & 1 \\
ALT\_BLOOD & Serum AlAT content (ALT\_BLOOD) & NA\_BLOOD & Serum sodium content & 1 \\
GIPER\_NA & Increase of sodium in serum (more than 150 mmol/L) & NA\_BLOOD & Serum sodium content & 1 \\
SEX & Male & NA\_BLOOD & Serum sodium content & 1 \\
R\_AB\_3\_n & Relapse of the pain in the third day of the hospital period & ROE & ESR (Erythrocyte sedimentation rate) & 1 \\
R\_AB\_3\_n & Relapse of the pain in the third day of the hospital period & S\_AD\_ORIT & Systolic blood pressure according to intensive care unit & 1 \\
R\_AB\_3\_n & Relapse of the pain in the third day of the hospital period & D\_AD\_KBRIG & Diastolic blood pressure according to Emergency Cardiology Team & 1 \\
R\_AB\_3\_n & Relapse of the pain in the third day of the hospital period & D\_AD\_ORIT & Diastolic blood pressure according to intensive care unit & 1 \\
IBS\_NASL & Heredity on CHD & AGE & Age of patient. & 1 \\
zab\_leg\_02 & Obstructive chronic bronchitis in the anamnesis & AGE & Age of patient. & 1 \\
zab\_leg\_01 & Chronic bronchitis in the anamnesis & AGE & Age of patient. & 1 \\
GB & Presence of an essential hypertension & AGE & Age of patient. & 1 \\
fibr\_ter\_03 & Fibrinolytic therapy by Celiasum 3m IU & AGE & Age of patient. & 1 \\
SEX & Male & AGE & Age of patient. & 1 \\
MP\_TP\_POST & Paroxysms of atrial fibrillation at the time of admission to intensive care unit, (or at a pre-hospital stage) & AGE & Age of patient. & 1 \\
lat\_im & Presence of a lateral myocardial infarction (left ventricular)  & AGE & Age of patient. & 1 \\
ritm\_ecg\_p\_01 & ECG rhythm at the time of admission to hospital: sinus (with a heart rate 60-90) & AGE & Age of patient. & 1 \\
n\_r\_ecg\_p\_01 & Premature atrial contractions on ECG at the time of admission to hospital & AGE & Age of patient. & 1 \\
ALT\_BLOOD & Serum AlAT content (ALT\_BLOOD) & AGE & Age of patient. & 1 \\
NA\_R\_1\_n & Use of opioid drugs in the ICU in the first hours of the hospital period & AGE & Age of patient. & 1 \\
B\_BLOK\_S\_n & Use of beta-blockers in the ICU & AGE & Age of patient. & 1 \\
GEPAR\_S\_n & Use of a anticoagulants (heparin) in the ICU & AGE & Age of patient. & 1 \\
FIBR\_PREDS & Atrial fibrillation & AGE & Age of patient. & 1 \\
OTEK\_LANC & Pulmonary edema & AGE & Age of patient. & 1 \\
RAZRIV & Myocardial rupture & AGE & Age of patient. & 1 \\
ZSN & Chronic heart failure & AGE & Age of patient. & 1 \\
REC\_IM & Relapse of the myocardial infarction & AGE & Age of patient. & 1 \\
LET\_IS & Lethal outcome & AGE & Age of patient. & 1 \\
NITR\_S & Use of liquid nitrates in the ICU & NA\_BLOOD & Serum sodium content & 1 \\
GEPAR\_S\_n & Use of a anticoagulants (heparin) in the ICU & L\_BLOOD & White blood cell count & 1 \\
IBS\_POST & Coronary heart disease (CHD) in recent weeks, days before admission to hospital & L\_BLOOD & White blood cell count & 1 \\
ritm\_ecg\_p\_01 & ECG rhythm at the time of admission to hospital: sinus (with a heart rate 60-90) & L\_BLOOD & White blood cell count & 1 \\
L\_BLOOD & White blood cell count & ROE & ESR (Erythrocyte sedimentation rate) & 1 \\
NA\_KB & Use of opioid drugs by the Emergency Cardiology Team & D\_AD\_KBRIG & Diastolic blood pressure according to Emergency Cardiology Team & 1 \\
NA\_KB & Use of opioid drugs by the Emergency Cardiology Team & ROE & ESR (Erythrocyte sedimentation rate) & 1 \\
NA\_KB & Use of opioid drugs by the Emergency Cardiology Team & TIME\_B\_S & Time elapsed from the beginning of the attack of CHD to the hospital & 1 \\
NA\_KB & Use of opioid drugs by the Emergency Cardiology Team & NA\_BLOOD & Serum sodium content & 1 \\
NA\_KB & Use of opioid drugs by the Emergency Cardiology Team & D\_AD\_ORIT & Diastolic blood pressure according to intensive care unit & 1 \\
ALT\_BLOOD & Serum AlAT content (ALT\_BLOOD) & AGE & Age of patient. & 1 \\
ALT\_BLOOD & Serum AlAT content (ALT\_BLOOD) & D\_AD\_ORIT & Diastolic blood pressure according to intensive care unit & 1 \\
ALT\_BLOOD & Serum AlAT content (ALT\_BLOOD) & NA\_BLOOD & Serum sodium content & 1 \\
ALT\_BLOOD & Serum AlAT content (ALT\_BLOOD) & D\_AD\_KBRIG & Diastolic blood pressure according to Emergency Cardiology Team & 1 \\
ALT\_BLOOD & Serum AlAT content (ALT\_BLOOD) & S\_AD\_ORIT & Systolic blood pressure according to intensive care unit & 1 \\
SIM\_GIPERT & Symptomatic hypertension & S\_AD\_ORIT & Systolic blood pressure according to intensive care unit & 1 \\
INF\_ANAM & Quantity of myocardial infarctions in the anamnesis. & S\_AD\_ORIT & Systolic blood pressure according to intensive care unit & 1 \\
GB & Presence of an essential hypertension & S\_AD\_ORIT & Systolic blood pressure according to intensive care unit & 1 \\
SEX & Male & S\_AD\_ORIT & Systolic blood pressure according to intensive care unit & 1 \\
D\_AD\_ORIT & Diastolic blood pressure according to intensive care unit & S\_AD\_ORIT & Systolic blood pressure according to intensive care unit & 1 \\
K\_SH\_POST & Cardiogenic shock at the time of admission to intensive care unit & S\_AD\_ORIT & Systolic blood pressure according to intensive care unit & 1 \\
ant\_im & Presence of an anterior myocardial infarction (left ventricular)  & S\_AD\_ORIT & Systolic blood pressure according to intensive care unit & 1 \\
ritm\_ecg\_p\_01 & ECG rhythm at the time of admission to hospital: sinus (with a heart rate 60-90) & S\_AD\_ORIT & Systolic blood pressure according to intensive care unit & 1 \\
ritm\_ecg\_p\_07 & ECG rhythm at the time of admission to hospital: sinus with a heart rate above 90 (tachycardia) & S\_AD\_ORIT & Systolic blood pressure according to intensive care unit & 1 \\
n\_r\_ecg\_p\_05 & Paroxysms of atrial fibrillation on ECG at the time of admission to hospital & S\_AD\_ORIT & Systolic blood pressure according to intensive care unit & 1 \\
P\_IM\_STEN & Post-infarction angina & S\_AD\_ORIT & Systolic blood pressure according to intensive care unit & 1 \\
K\_BLOOD & Serum potassium content & S\_AD\_ORIT & Systolic blood pressure according to intensive care unit & 1 \\
ALT\_BLOOD & Serum AlAT content (ALT\_BLOOD) & S\_AD\_ORIT & Systolic blood pressure according to intensive care unit & 1 \\
R\_AB\_3\_n & Relapse of the pain in the third day of the hospital period & S\_AD\_ORIT & Systolic blood pressure according to intensive care unit & 1 \\
NOT\_NA\_KB & Use of NSAIDs by the Emergency Cardiology Team & S\_AD\_ORIT & Systolic blood pressure according to intensive care unit & 1 \\
LID\_S\_n & Use of lidocaine in the ICU & S\_AD\_ORIT & Systolic blood pressure according to intensive care unit & 1 \\
ANT\_CA\_S\_n & Use of calcium channel blockers in the ICU & S\_AD\_ORIT & Systolic blood pressure according to intensive care unit & 1 \\
TRENT\_S\_n & Use of Trental in the ICU & S\_AD\_ORIT & Systolic blood pressure according to intensive care unit & 1 \\
RAZRIV & Myocardial rupture & S\_AD\_ORIT & Systolic blood pressure according to intensive care unit & 1 \\
LET\_IS & Lethal outcome & S\_AD\_ORIT & Systolic blood pressure according to intensive care unit & 1 \\
IBS\_POST & Coronary heart disease (CHD) in recent weeks, days before admission to hospital & ROE & ESR (Erythrocyte sedimentation rate) & 1 \\
GB & Presence of an essential hypertension & ROE & ESR (Erythrocyte sedimentation rate) & 1 \\
INF\_ANAM & Quantity of myocardial infarctions in the anamnesis. & ROE & ESR (Erythrocyte sedimentation rate) & 1 \\
L\_BLOOD & White blood cell count & ROE & ESR (Erythrocyte sedimentation rate) & 1 \\
NA\_R\_1\_n & Use of opioid drugs in the ICU in the first hours of the hospital period & ROE & ESR (Erythrocyte sedimentation rate) & 1 \\
endocr\_01 & Diabetes mellitus in the anamnesis & ROE & ESR (Erythrocyte sedimentation rate) & 1 \\
ritm\_ecg\_p\_07 & ECG rhythm at the time of admission to hospital: sinus with a heart rate above 90 (tachycardia) & ROE & ESR (Erythrocyte sedimentation rate) & 1 \\
ASP\_S\_n & Use of acetylsalicylic acid in the ICU & ROE & ESR (Erythrocyte sedimentation rate) & 1 \\
GEPAR\_S\_n & Use of an anticoagulants (heparin) in the ICU & ROE & ESR (Erythrocyte sedimentation rate) & 1 \\
R\_AB\_1\_n & Relapse of the pain in the first hours of the hospital period & ROE & ESR (Erythrocyte sedimentation rate) & 1 \\
R\_AB\_2\_n & Relapse of the pain in the second day of the hospital period & ROE & ESR (Erythrocyte sedimentation rate) & 1 \\
R\_AB\_3\_n & Relapse of the pain in the third day of the hospital period & ROE & ESR (Erythrocyte sedimentation rate) & 1 \\
NA\_KB & Use of opioid drugs by the Emergency Cardiology Team & ROE & ESR (Erythrocyte sedimentation rate) & 1 \\
LID\_KB & Use of lidocaine by the Emergency Cardiology Team & ROE & ESR (Erythrocyte sedimentation rate) & 1 \\
SEX & Male & ROE & ESR (Erythrocyte sedimentation rate) & 1 \\
NA\_R\_2\_n & Use of opioid drugs in the ICU in the second day of the hospital period & D\_AD\_ORIT & Diastolic blood pressure according to intensive care unit & 1 \\
PREDS\_TAH & Supraventricular tachycardia & D\_AD\_ORIT & Diastolic blood pressure according to intensive care unit & 1 \\
FIBR\_JELUD & Ventricular fibrillation & D\_AD\_ORIT & Diastolic blood pressure according to intensive care unit & 1 \\
NA\_R\_3\_n & Use of opioid drugs in the ICU in the third day of the hospital period & D\_AD\_ORIT & Diastolic blood pressure according to intensive care unit & 1 \\
ritm\_ecg\_p\_07 & ECG rhythm at the time of admission to hospital: sinus with a heart rate above 90 (tachycardia) & D\_AD\_ORIT & Diastolic blood pressure according to intensive care unit & 1 \\
n\_p\_ecg\_p\_06 & Third-degree AV block on ECG at the time of admission to hospital & D\_AD\_ORIT & Diastolic blood pressure according to intensive care unit & 1 \\
NOT\_NA\_KB & Use of NSAIDs by the Emergency Cardiology Team & D\_AD\_ORIT & Diastolic blood pressure according to intensive care unit & 1 \\
NA\_KB & Use of opioid drugs by the Emergency Cardiology Team & D\_AD\_ORIT & Diastolic blood pressure according to intensive care unit & 1 \\
R\_AB\_3\_n & Relapse of the pain in the third day of the hospital period & D\_AD\_ORIT & Diastolic blood pressure according to intensive care unit & 1 \\
R\_AB\_1\_n & Relapse of the pain in the first hours of the hospital period & D\_AD\_ORIT & Diastolic blood pressure according to intensive care unit & 1 \\
KFK\_BLOOD & Serum CPK content & D\_AD\_ORIT & Diastolic blood pressure according to intensive care unit & 1 \\
ALT\_BLOOD & Serum AlAT content (ALT\_BLOOD) & D\_AD\_ORIT & Diastolic blood pressure according to intensive care unit & 1 \\
K\_BLOOD & Serum potassium content & D\_AD\_ORIT & Diastolic blood pressure according to intensive care unit & 1 \\
GIPO\_K & Hypokalemia ( < 4 mmol/L) & D\_AD\_ORIT & Diastolic blood pressure according to intensive care unit & 1 \\
fibr\_ter\_06 & Fibrinolytic therapy by Celiasum 500k & D\_AD\_ORIT & Diastolic blood pressure according to intensive care unit & 1 \\
P\_IM\_STEN & Post-infarction angina & D\_AD\_ORIT & Diastolic blood pressure according to intensive care unit & 1 \\
fibr\_ter\_01 & Fibrinolytic therapy by Celiasum 750k & D\_AD\_ORIT & Diastolic blood pressure according to intensive care unit & 1 \\
n\_p\_ecg\_p\_12 & Complete RBBB on ECG at the time of admission to hospital & D\_AD\_ORIT & Diastolic blood pressure according to intensive care unit & 1 \\
n\_p\_ecg\_p\_07 & LBBB (anterior branch) on ECG at the time of admission to hospital & D\_AD\_ORIT & Diastolic blood pressure according to intensive care unit & 1 \\
LET\_IS & Lethal outcome & D\_AD\_ORIT & Diastolic blood pressure according to intensive care unit & 1 \\
R\_AB\_1\_n & Relapse of the pain in the first hours of the hospital period & ROE & ESR (Erythrocyte sedimentation rate) & 1 \\
R\_AB\_1\_n & Relapse of the pain in the first hours of the hospital period & NA\_BLOOD & Serum sodium content & 1 \\
R\_AB\_1\_n & Relapse of the pain in the first hours of the hospital period & D\_AD\_ORIT & Diastolic blood pressure according to intensive care unit & 1 \\
AST\_BLOOD & Serum AsAT content & D\_AD\_KBRIG & Diastolic blood pressure according to Emergency Cardiology Team & 1 \\
R\_AB\_2\_n & Relapse of the pain in the second day of the hospital period & ROE & ESR (Erythrocyte sedimentation rate) & 1 \\
R\_AB\_2\_n & Relapse of the pain in the second day of the hospital period & D\_AD\_KBRIG & Diastolic blood pressure according to Emergency Cardiology Team & 1 \\
NA\_KB & Use of opioid drugs by the Emergency Cardiology Team & TIME\_B\_S & Time elapsed from the beginning of the attack of CHD to the hospital & 1 \\
NA\_R\_1\_n & Use of opioid drugs in the ICU in the first hours of the hospital period & TIME\_B\_S & Time elapsed from the beginning of the attack of CHD to the hospital & 1 \\
K\_BLOOD & Serum potassium content & S\_AD\_ORIT & Systolic blood pressure according to intensive care unit & 1 \\
K\_BLOOD & Serum potassium content & NA\_BLOOD & Serum sodium content & 1 \\
K\_BLOOD & Serum potassium content & D\_AD\_KBRIG & Diastolic blood pressure according to Emergency Cardiology Team & 1 \\
K\_BLOOD & Serum potassium content & D\_AD\_ORIT & Diastolic blood pressure according to intensive care unit & 1 \\
NOT\_NA\_KB & Use of NSAIDs by the Emergency Cardiology Team & S\_AD\_ORIT & Systolic blood pressure according to intensive care unit & 1 \\
NOT\_NA\_KB & Use of NSAIDs by the Emergency Cardiology Team & S\_AD\_KBRIG & Systolic blood pressure according to Emergency Cardiology Team & 1 \\
NOT\_NA\_KB & Use of NSAIDs by the Emergency Cardiology Team & D\_AD\_KBRIG & Diastolic blood pressure according to Emergency Cardiology Team & 1 \\
NOT\_NA\_KB & Use of NSAIDs by the Emergency Cardiology Team & D\_AD\_ORIT & Diastolic blood pressure according to intensive care unit & 1 \\
LID\_KB & Use of lidocaine by the Emergency Cardiology Team & S\_AD\_KBRIG & Systolic blood pressure according to Emergency Cardiology Team & 1 \\
LID\_KB & Use of lidocaine by the Emergency Cardiology Team & D\_AD\_KBRIG & Diastolic blood pressure according to Emergency Cardiology Team & 1 \\
LID\_KB & Use of lidocaine by the Emergency Cardiology Team & ROE & ESR (Erythrocyte sedimentation rate) & 1 \\
DRESSLER & Dressler syndrome & D\_AD\_ORIT & Diastolic blood pressure according to intensive care unit & 1 \\
KFK\_BLOOD & Serum CPK content & D\_AD\_ORIT & Diastolic blood pressure according to intensive care unit & 1 \\
NA\_R\_1\_n & Use of opioid drugs in the ICU in the first hours of the hospital period & D\_AD\_KBRIG & Diastolic blood pressure according to Emergency Cardiology Team & 1 \\
NA\_R\_2\_n & Use of opioid drugs in the ICU in the second day of the hospital period & D\_AD\_KBRIG & Diastolic blood pressure according to Emergency Cardiology Team & 1 \\
zab\_leg\_01 & Chronic bronchitis in the anamnesis & D\_AD\_KBRIG & Diastolic blood pressure according to Emergency Cardiology Team & 1 \\
ZSN\_A & Presence of chronic Heart failure (HF) in the anamnesis & S\_AD\_KBRIG & Systolic blood pressure according to Emergency Cardiology Team & 1 \\
endocr\_01 & Diabetes mellitus in the anamnesis & D\_AD\_KBRIG & Diastolic blood pressure according to Emergency Cardiology Team & 1 \\
RAZRIV & Myocardial rupture & S\_AD\_KBRIG & Systolic blood pressure according to Emergency Cardiology Team & 1 \\
ZSN & Chronic heart failure & D\_AD\_KBRIG & Diastolic blood pressure according to Emergency Cardiology Team & 1 \\
IBS\_NASL & Heredity on CHD & D\_AD\_KBRIG & Diastolic blood pressure according to Emergency Cardiology Team & 1 \\
B\_BLOK\_S\_n & Use of beta-blockers in the ICU & S\_AD\_KBRIG & Systolic blood pressure according to Emergency Cardiology Team & 1 \\
B\_BLOK\_S\_n & Use of beta-blockers in the ICU & D\_AD\_KBRIG & Diastolic blood pressure according to Emergency Cardiology Team & 1 \\
lat\_im & Presence of a lateral myocardial infarction (left ventricular)  & ant\_im & Presence of an anterior myocardial infarction (left ventricular)  & 1 \\
K\_SH\_POST & Cardiogenic shock at the time of admission to intensive care unit & S\_AD\_KBRIG & Systolic blood pressure according to Emergency Cardiology Team & 1 \\
TRENT\_S\_n & Use of Trental in the ICU & D\_AD\_KBRIG & Diastolic blood pressure according to Emergency Cardiology Team & 1 \\
ritm\_ecg\_p\_07 & ECG rhythm at the time of admission to hospital: sinus with a heart rate above 90 (tachycardia) & D\_AD\_KBRIG & Diastolic blood pressure according to Emergency Cardiology Team & 1 \\
GB & Presence of an essential hypertension & D\_AD\_KBRIG & Diastolic blood pressure according to Emergency Cardiology Team & 1 \\
ASP\_S\_n & Use of acetylsalicylic acid in the ICU & D\_AD\_KBRIG & Diastolic blood pressure according to Emergency Cardiology Team & 1 \\
GIPO\_K & Hypokalemia ( < 4 mmol/L) & D\_AD\_KBRIG & Diastolic blood pressure according to Emergency Cardiology Team & 1 \\
P\_IM\_STEN & Post-infarction angina & D\_AD\_KBRIG & Diastolic blood pressure according to Emergency Cardiology Team & 1 \\
lat\_im & Presence of a lateral myocardial infarction (left ventricular)  & D\_AD\_KBRIG & Diastolic blood pressure according to Emergency Cardiology Team & 1 \\
lat\_im & Presence of a lateral myocardial infarction (left ventricular)  & ant\_im & Presence of an anterior myocardial infarction (left ventricular)  & 1 \\
FK\_STENOK & Functional class (FC) of angina pectoris in the last year. & S\_AD\_KBRIG & Systolic blood pressure according to Emergency Cardiology Team & 1 \\
FK\_STENOK & Functional class (FC) of angina pectoris in the last year. & D\_AD\_KBRIG & Diastolic blood pressure according to Emergency Cardiology Team & 1 \\
MP\_TP\_POST & Paroxysms of atrial fibrillation at the time of admission to intensive care unit, (or at a pre-hospital stage) & D\_AD\_KBRIG & Diastolic blood pressure according to Emergency Cardiology Team & 1 \\
FIBR\_PREDS & Atrial fibrillation & D\_AD\_KBRIG & Diastolic blood pressure according to Emergency Cardiology Team & 1 \\
GEPAR\_S\_n & Use of an anticoagulants (heparin) in the ICU & S\_AD\_KBRIG & Systolic blood pressure according to Emergency Cardiology Team & 1 \\
K\_SH\_POST & Cardiogenic shock at the time of admission to intensive care unit & S\_AD\_KBRIG & Systolic blood pressure according to Emergency Cardiology Team & 1 \\
ZSN\_A & Presence of chronic Heart failure (HF) in the anamnesis & S\_AD\_KBRIG & Systolic blood pressure according to Emergency Cardiology Team & 1 \\
B\_BLOK\_S\_n & Use of beta-blockers in the ICU & S\_AD\_KBRIG & Systolic blood pressure according to Emergency Cardiology Team & 1 \\
D\_AD\_KBRIG & Diastolic blood pressure according to Emergency Cardiology Team & S\_AD\_KBRIG & Systolic blood pressure according to Emergency Cardiology Team & 1 \\
RAZRIV & Myocardial rupture & S\_AD\_KBRIG & Systolic blood pressure according to Emergency Cardiology Team & 1 \\
FK\_STENOK & Functional class (FC) of angina pectoris in the last year. & S\_AD\_KBRIG & Systolic blood pressure according to Emergency Cardiology Team & 1 \\
GEPAR\_S\_n & Use of an anticoagulants (heparin) in the ICU & S\_AD\_KBRIG & Systolic blood pressure according to Emergency Cardiology Team & 1 \\
IBS\_POST & Coronary heart disease (CHD) in recent weeks, days before admission to hospital & D\_AD\_KBRIG & Diastolic blood pressure according to Emergency Cardiology Team & 1 \\
FIBR\_PREDS & Atrial fibrillation & D\_AD\_KBRIG & Diastolic blood pressure according to Emergency Cardiology Team & 1 \\
NA\_R\_2\_n & Use of opioid drugs in the ICU in the second day of the hospital period & D\_AD\_KBRIG & Diastolic blood pressure according to Emergency Cardiology Team & 1 \\
ASP\_S\_n & Use of acetylsalicylic acid in the ICU & D\_AD\_KBRIG & Diastolic blood pressure according to Emergency Cardiology Team & 1 \\
NA\_R\_1\_n & Use of opioid drugs in the ICU in the first hours of the hospital period & D\_AD\_KBRIG & Diastolic blood pressure according to Emergency Cardiology Team & 1 \\
MP\_TP\_POST & Paroxysms of atrial fibrillation at the time of admission to intensive care unit, (or at a pre-hospital stage) & D\_AD\_KBRIG & Diastolic blood pressure according to Emergency Cardiology Team & 1 \\
B\_BLOK\_S\_n & Use of beta-blockers in the ICU & D\_AD\_KBRIG & Diastolic blood pressure according to Emergency Cardiology Team & 1 \\
P\_IM\_STEN & Post-infarction angina & D\_AD\_KBRIG & Diastolic blood pressure according to Emergency Cardiology Team & 1 \\
ritm\_ecg\_p\_07 & ECG rhythm at the time of admission to hospital: sinus with a heart rate above 90 (tachycardia) & D\_AD\_KBRIG & Diastolic blood pressure according to Emergency Cardiology Team & 1 \\
lat\_im & Presence of a lateral myocardial infarction (left ventricular)  & D\_AD\_KBRIG & Diastolic blood pressure according to Emergency Cardiology Team & 1 \\
D\_AD\_KBRIG & Diastolic blood pressure according to Emergency Cardiology Team & S\_AD\_KBRIG & Systolic blood pressure according to Emergency Cardiology Team & 1 \\
\end{longtable}

\end{document}